\renewcommand*{\backrefalt}[4]{%
  \ifcase#1 %
  \or
    p.~#2.%
  \else
    pp.~#2.%
  \fi
}
\DeclarePairedDelimiterX{\inp}[2]{\langle}{\rangle}{#1, #2}
\newcommand*\bigcdot{\mathpalette\bigcdot@{.5}}
\newcommand*\bigcdot@[2]{\mathbin{\vcenter{\hbox{\scalebox{#2}{$\m@th#1\bullet$}}}}}
\newcommand{\muspace}{\mspace{1mu}}
\DeclareRobustCommand{\scond}{\mathchoice{\muspace\vert\muspace}{\vert}{\vert}{\vert}}
\DeclareRobustCommand{\discint}{\mathchoice{\mspace{-1.5mu}:\mspace{-1.5mu}}{\mspace{-1.5mu}:\mspace{-1.5mu}}{:}{:}}
\newcommand{\suchthat}{\mathchoice{\colon}{\colon}{:\mspace{1mu}}{:}}
\def\tr{\mathop{\rm tr}\nolimits}%
\def\diag{\mathop{\rm diag}\nolimits}%
\newcommand{\Hc}{\mathcal{H}}
\newcommand{\Lc}{\mathcal{L}}
\newcommand{\Nc}{\mathcal{N}}
\newcommand{\Sc}{\mathcal{S}}
\newcommand{\Xc}{\mathcal{X}}
\newcommand{\fv}{{\bf f}}
\newcommand{\gv}{{\bf g}}
\newcommand{\wv}{{\bf w}}
\newcommand{\xv}{{\bf x}}
\newcommand{\vv}{{\bf v}}
\newcommand{\alphav}{{\boldsymbol \alpha}}
\def\a{\alpha}
\def\th{\theta}
\DeclareMathOperator\E{\mathsf{E}}
\newcommand\ie{i.e.,\xspace}
\def\textiid{i.i.d.\@\xspace}
\newcommand\iid{\ifmmode\text{ i.i.d. } \else \textiid \fi}
\newcommand{\Real}{\mathbb{R}}
\newcommand{\Natural}{\mathbb{N}}
\newcommand{\half}{\frac{1}{2}}%
\def\mathllap{\mathpalette\mathllapinternal}
\def\mathllapinternal#1#2{%
  \llap{$\mathsurround=0pt#1{#2}$}}
\def\clap#1{\hbox to 0pt{\hss#1\hss}}
\def\mathclap{\mathpalette\mathclapinternal}
\def\mathclapinternal#1#2{%
  \clap{$\mathsurround=0pt#1{#2}$}}
\let\oldstackrel\stackrel
\renewcommand{\stackrel}[2]{\oldstackrel{\mathclap{#1}}{#2}}
\DeclarePairedDelimiterX{\infdivx}[2]{(}{)}{%
  #1\;\delimsize\|\;#2%
}
\renewcommand{\hbar}{h\mathllap{\overline{\vphantom{h}\hphantom{\rule{4.6pt}{0pt}}}\mspace{0.77mu}}}
\newcommand{\urltilde}{\kern -.06em\lower -.06em\hbox{~}\kern .02em}
\DeclarePairedDelimiterX{\norm}[1]{\lVert}{\rVert}{#1}
\DeclarePairedDelimiterX{\abs}[1]{\lvert}{\rvert}{#1}
\let\oldpartial\partial
\renewcommand*{\partial}{\mathop{}\!\oldpartial}
\newcommand{\defeq}{\mathrel{\mathop{:}}=}
\newcommand{\Dm}{\mathsf{D}}
\newcommand{\Wm}{\mathsf{W}}
\newcommand{\Lm}{\mathsf{L}}
\newcommand{\weights}{\boldsymbol{\mathsf{w}}}
\newcommand{\Pm}{\mathsf{P}}
\newcommand{\Vm}{\mathsf{V}}
\newcommand{\matrixmask}{{\mathsf{M}}}
\newcommand{\Mm}{\matrixmask}
\newcommand{\Am}{\mathsf{A}}
\DeclareMathAlphabet{\mathpzc}{OT1}{pzc}{m}{it}
\newcommand{\Iop}{\mathpzc{I}}
\newcommand{\Top}{\mathpzc{T}}
\newcommand{\Hop}{\mathpzc{H}}
\newcommand{\numberthis}{\addtocounter{equation}{1}\tag{\theequation}}
\definecolor{codegreen}{rgb}{0,0.6,0}
\definecolor{codegray}{rgb}{0.5,0.5,0.5}
\definecolor{codepurple}{rgb}{0.58,0,0.82}
\definecolor{backcolour}{rgb}{0.95,0.95,0.92}
\lstdefinestyle{mystyle}{
    language=Python,
    keywords={def,class,return,None,raise,mean,sqrt},    
    backgroundcolor=\color{backcolour},   
    commentstyle=\color{codegreen},
    keywordstyle=\color{blue},
    numberstyle=\tiny\color{codegray},
    stringstyle=\color{codepurple},
    basicstyle=\ttfamily\footnotesize,
    breakatwhitespace=false,         
    breaklines=true,                 
    captionpos=b,                    
    keepspaces=true,                 
    numbers=left,                    
    xleftmargin=.375cm,
    numbersep=5pt,
    showspaces=false,                
    showstringspaces=false,
    showtabs=false,                  
    tabsize=4,    
}
\newif\ifforreview
\newcommand\StartAppendixEntries{}
  \renewcommand\StartAppendixEntries{\value{tocdepth}=-10000\relax}%
  \edef\maintocdepth{\the\value{tocdepth}}%
  \renewcommand\StartAppendixEntries{\value{tocdepth}=\maintocdepth\relax}%
\renewcommand{\E}{\mathbb{E}}
\newtheorem{theorem}{Theorem}
\theoremstyle{definition}
\newcommand{\mm}{\mathsf{m}}
\newcommand{\Imat}{\mathsf{I}}
\newcommand{\Qm}{\mathsf{Q}}
\newcommand{\Sm}{\mathsf{S}}
\newcommand{\Rm}{\mathsf{R}}
\newcommand{\omm}{\mathsf{omm}}
\newcommand{\lora}{\mathsf{lora}}
\newcommand{\Lop}{\mathpzc{L}}
\newcommand{\pv}{\mathbf{p}}
\newcommand{\qv}{\mathbf{q}}
\newcommand{\Om}{\mathsf{O}}
\renewcommand{\tr}{\mathsf{tr}}
\renewcommand{\diag}{\mathsf{diag}}
\DeclareMathOperator{\spn}{\mathsf{span}}
\newcommand{\secondmoment}[2]{\Mm_{#1}[#2]}
\newcommand{\seqsecondmoment}[2]{\Mm_{#1}^{\mathsf{seq}}[#2]}
\newcommand{\stopgrad}[1]{\mathsf{sg}[#1]}
\newcommand{\hilbert}[2]{L_{#1}^2(#2)}
\newcommand{\jon}[1]{{\textbf{\textcolor{blue}{Jon: #1}}}}  
\newcommand{\guide}[1]{\textbf{\color{Violet}[#1]}}
\renewcommand{\weights}{\alphav}
\newcommand{\weight}{\a}
\newif\ifFINAL
\newif\ifPREPRINT
  \def\jon#1{}
  \def\guide#1{}
  \renewcommand{\showlabelfont}%
  {\transparent{0.8}\scriptsize\bf\slshape\color{Lavender}}
\title{Revisiting Orbital Minimization Method\\ for Neural Operator Decomposition}
\author{%
    \textbf{J.~Jon~Ryu},
    \textbf{Samuel Zhou},
    \textbf{Gregory W. Wornell}\\
    Department of EECS, MIT, Cambridge, MA 02139, United States \\
    \texttt{\{\href{mailto:jongha@mit.edu}{jongha},\href{mailto:jongha@mit.edu}{samtzhou},\href{mailto:samtzhou@mit.edu}{gww}\}@mit.edu} %
}
\begin{document}

\maketitle

\begin{abstract}
Spectral decomposition of linear operators plays a central role in many areas of machine learning and scientific computing. Recent work has explored training neural networks to approximate eigenfunctions of such operators, enabling scalable approaches to representation learning, dynamical systems, and partial differential equations (PDEs). In this paper, we revisit a classical optimization framework from the computational physics literature known as the \emph{orbital minimization method} (OMM), originally proposed in the 1990s for solving eigenvalue problems in computational chemistry. We provide a simple linear-algebraic proof of the consistency of the OMM objective, and reveal connections between this method and several ideas that have appeared independently across different domains. Our primary goal is to justify its broader applicability in modern learning pipelines. We adapt this framework to train neural networks to decompose positive semidefinite operators, and demonstrate its practical advantages across a range of benchmark tasks. Our results highlight how revisiting classical numerical methods through the lens of modern theory and computation can provide not only a principled approach for deploying neural networks in numerical simulation, but also effective and scalable tools for machine learning.
\end{abstract}

\section{Introduction}
Spectral decomposition of linear operators is a foundational tool in applied mathematics, with far-reaching implications across machine learning (ML) and scientific computing. 
Eigenfunctions of differential, integral, and graph-based operators capture essential geometric and dynamical structures, playing a central role in problems ranging from representation learning to solving partial differential equations (PDEs). 
As such, the design of efficient and scalable methods for approximating these eigenfunctions has become a core pursuit in modern computational science.

Recent advances have leveraged neural networks to approximate spectral components of operators, offering promising avenues for scaling classical techniques to high-dimensional or irregular domains in scientific simulation. 
Notable examples include recent breakthroughs in quantum chemistry based on neural-network ansatzes~\citep{Pfau--Spencer--Matthews2020,Entwistle--Schatzle--Erdman--Hermann--Noe2023,Pfau--Axelrod--Sutterud--von-Glehn--Spencer2024}.
Beyond quantum chemistry, neural approaches have also proven effective in diverse domains such as spectral embeddings~\citep{HaoChen--Wei--Gaidon--Ma2021,Ryu--Xu--Erol--Bu--Zheng--Wornell2024}, Koopman operator theory for analyzing dynamical systems~\citep{Mardt--Pasquali--Wu--Noe2018vampnets,Kostic--Novelli--Grazzi--Lounici--Pontil2024iclr,Jeong--Ryu--Yun--Wornell2025}, and neural solvers for PDEs~\citep{Lagaris--Likas--Fotiadis1997}.
For an overview of the broader literature, we refer the reader to \citep[Appendix~B]{Ryu--Xu--Erol--Bu--Zheng--Wornell2024}.
However, many existing methods rely on surrogate losses or architectural constraints that lack a clear variational foundation, often leading to brittle optimization or limited extensibility.

In this work, we revisit a classical optimization framework from computational quantum chemistry known as the \emph{orbital minimization method} (OMM). 
Originally developed in the 1990s for finding the ground state in electronic-structure theory~\citep{Ordejon--Drabold--Grumbach--Martin1993,Mauri--Galli--Car1993,Mauri--Galli1994,Ordejon--Drabold--Martin--Grumbach1995,Bowler--Miyazaki2012}, OMM offers a direct and elegant approach to approximating the top eigenspace of a positive-definite operator, without requiring explicit orthonormalization. 
Despite its heuristic, domain-specific origins, we derive OMM rigorously from a simple variational principle valid for general positive-semidefinite matrices and operators.

We leverage this reinterpretation to construct a modern neural variant of OMM, suitable for decomposing a wide class of linear operators. Our contributions are threefold:
\begin{enumerate}
\item We provide a new derivation of the OMM objective with a simple linear-algebraic proof for finite-dimensional cases, clarifying its key idea and extending its theoretical foundation. When specialized for streaming PCA, we identify its connection to the celebrated Sanger's rule, which is often known as the generalized Hebbian algorithm~\citep{Sanger1989}.
\item We adapt this framework to train neural networks that learn eigenspaces of positive-definite operators, eliminating the need for explicit orthogonalization or eigensolvers.
\item We empirically demonstrate the effectiveness of our method across a range of tasks, including learning Laplacian-based representations in reinforcement learning settings, solving PDEs, and self-supervised-learning representation of images and graphs.
\end{enumerate}

We emphasize that properly positioning the OMM within the modern ML literature is of both theoretical and practical significance.  
Although the resulting formulation may appear natural in hindsight, recent work in ML has proposed a variety of alternative, and often considerably more complex, approaches for computing eigenvectors of matrices or eigenfunctions of operators.  
In this work, we draw connections between OMM and several existing lines of research, including streaming PCA~\citep{Sanger1989} from theoretical statistics and computational science, as well as recent neural parameterizations used for operator learning in deep learning~\citep{Ryu--Xu--Erol--Bu--Zheng--Wornell2024}, and independent development of spectral techniques in reinforcement learning literature~\citep{Wang--Zhou--Zhang--Shao--Hooi--Feng2021,Gomez--Bowling--Machado2024}.  
These connections provide a unified perspective on seemingly disparate methods.  
Our findings underscore the value of revisiting classical numerical techniques through the lens of modern ML, offering both conceptual clarity and practical advantages for scalable spectral learning.  

\textbf{Notation.}
For an integer $n\ge 1$, $[n]\defeq \{1,\ldots,n\}$.
We use capital sans-serif fonts, such as $\Am$ and $\Vm$, to denote matrices. In particular, $\Imat_k\in\Real^{k\times k}$ denotes the identity matrix. 
For a square matrix $\Am$, $\tr(\Am)$ denotes its trace.
For a matrix $\Vm\in\Real^{d_1\times d_2}=[\vv_{1},\ldots,\vv_{d_2}]$, where $\vv_j\in\Real^{d_1}$ for each $j\in[d_2]$, we use the subscript notation $\Vm_{1:k}\defeq [\vv_1,\ldots,\vv_{k}]$ for $1\le k\le d_2$ to denote the subset of columns.

\section{Methods}
In this section, we revisit the orbital minimization method and provide a new variational perspective for the finite-dimensional matrix case.
We then present its application to an infinite-dimensional problem, i.e., an operator problem.

\subsection{Preliminary: Matrix Eigenvalue Problem}
Let $\Am\in\Real^{d\times d}$ be a real symmetric matrix of interest.\footnote{We focus on real matrices for simplicity, but the technique can be naturally adapted for Hermitian matrices.}
We wish to compute the top-$k$ eigenvectors of $\Am$. 
By the spectral theorem, $\Am$ admits an eigenvalue decomposition, i.e., there exists an orthogonal eigenbasis $\Wm=[\wv_1,\ldots,\wv_d]\in\Real^{d\times d}$ such that we can write $\Am=\Wm\Lambda\Wm^\intercal = \sum_{i=1}^d \lambda_i \wv_i\wv_i^\intercal$, where $\Lambda\defeq\diag(\lambda_1,\ldots,\lambda_d)$, where $\lambda_1\ge\lambda_2\ge \ldots\ge \lambda_d$.
Our goal is to find the top-$k$ eigensubspace, i.e., $\spn\{\wv_1,\ldots,\wv_k\}$, and ideally the top-$k$ eigenvectors $\wv_1,\ldots,\wv_k$ and corresponding eigenvalues $\lambda_1,\ldots,\lambda_k$.
We use a shorthand notation $a\wedge b\defeq \min\{a,b\}$ for $a,b\in\Real$.

For a symmetric matrix $\Am\in\Real^{d\times d}$, the most standard variational characterization of the top-$k$ eigensubspace is the \emph{multicolumn Rayleigh quotient maximization} formulation, which is
\[
\max_{\Vm\in\Real^{d\times k}\suchthat \Vm^\intercal\Vm=\Imat_k} \tr(\Vm^\intercal \Am\Vm).
\numberthis\label{eq:multicol_rayleigh}
\]
Note that it is a natural extension of the maximum Rayleigh quotient characterization of the largest eigenvalue.
While we can derive its unconstrained version as
\[
\max_{\Vm\in\Real^{d\times k}} \tr((\Vm^\intercal\Vm)^{-1}\Vm^\intercal \Am\Vm),
\numberthis\label{eq:multicol_rayleigh_unconstrained}
\]
the inversion $(\Vm^\intercal\Vm)^{-1}$ renders the optimization less practical.
For example, if $\Vm$ is not full rank (or nearly degenerate), computing $ \left(\Vm^\intercal\Vm \right)^{-1}$ may be infeasible or numerically unstable.

\subsection{Orbital Minimization Method: Matrix Version}
\label{sec:new_derivation}
We now restrict our focus on a positive-semidefinite (PSD) matrix $\Am\in \Real^{d\times d}$.
The orbital minimization method (OMM)~\citep{Ordejon--Drabold--Grumbach--Martin1993,Ordejon--Drabold--Martin--Grumbach1995,Mauri--Galli--Car1993,Mauri--Galli1994,Lu--Thicke2017} aims to solve the following maximization problem:
\[
\min_{\Vm\in\Real^{d\times k}} 
\Lc_{\omm}(\Vm),\quad
\text{where }
\Lc_{\omm}(\Vm)\defeq 
-\tr((2\Imat_k-\Vm^\intercal\Vm)\Vm^\intercal\Am\Vm)
\numberthis\label{eq:omm}
\]
While it is an unconstrained objective, it is widely known that its minimizer $\Vm^\star$ satisfies $\Vm^\star(\Vm^\star)^\intercal=\Wm_{1:k}\Wm_{1:k}^\intercal$, i.e., $\Vm$ corresponds to the top-$k$ eigenvectors up to rotation.
Interestingly, the OMM objective does not have any spurious local minima, that is, any local minima is a global optima; see \citep[Theorem~2]{Lu--Thicke2017}.
This approach was originally proposed to develop linear-scaling algorithms for electronic-structure calculations, i.e., algorithms whose complexity scales linearly with the number of electrons in the structure, by avoiding explicit enforcement of the orthogonality constraint $\Vm^\intercal\Vm = \Imat$.

\subsubsection{Existing Derivations}
There exist two standard ways to motivate the OMM objective~\citep{Gao--Li--Lu2022}.
The first is a Lagrange-multiplier approach~\citep{Ordejon--Drabold--Grumbach--Martin1993}.
The Lagrangian of the constrained optimization formulation in Eq.~\eqref{eq:multicol_rayleigh} is $\Lc(\Vm, \Xi)
\defeq \tr(\Vm^\intercal \Am \Vm) - \tr(\Xi(\Vm^\intercal \Vm - \Imat_k))$,
where $\Xi\in\Real^{k\times k}$ denotes the Lagrange multiplier.
From the KKT conditions, the optimal Lagrange multiplier is $\Xi^*=\Vm^\intercal\Am\Vm$, and the Lagrangian $\Lc(\Vm, \Xi^*)$ yields the OMM objective.

Another argument is based on the finite-order approximation of the Neumann series expansion of the multicolumn Rayliegh quotient $\tr((\Vm^\intercal\Vm)^{-1}\Vm^\intercal \Am\Vm)$~\citep{Gao--Li--Lu2022}.
Assuming that the spectrum of $\Vm^\intercal\Vm$ is bounded by 1, we have the Neumann series expansion of the inverse matrix
$(\Vm^\intercal\Vm)^{-1} 
= (\Imat_k - (\Imat_k - \Vm^\intercal\Vm))^{-1}
= \sum_{k=0}^\infty (\Imat_k - \Vm^\intercal\Vm)^k.$
If we consider the first-order approximation, then $(\Vm^\intercal\Vm)^{-1}\approx \Imat_k + (\Imat_k-\Vm^\intercal\Vm)=2\Imat_k-\Vm^\intercal\Vm$, which yields the OMM objective:
\[
\tr((\Vm^\intercal\Vm)^{-1}\Vm^\intercal \Am\Vm)
\approx \tr((2\Imat_k-\Vm^\intercal\Vm)\Vm^\intercal \Am\Vm)
=-\Lc_{\omm}(\Vm).
\]

Interestingly, in their original paper, \citet{Mauri--Galli--Car1993} showed that a higher-order extension following the same idea is possible.
For any integer $p\ge 1$,
define $\Qm_p\defeq \sum_{i=0}^{2p-1}(\Imat-\Vm^\intercal\Vm)^{i}$, which can be understood as the order $(2p-1)$-th order approximation of $(\Vm^\intercal\Vm)^{-1}$. 
Then, the global minimizers of
\begin{align*}
\tilde{\Lc}_{\omm}^{(p)}(\Vm)
&\defeq -\tr(\Qm_p\Vm^\intercal\Am\Vm)
=-\tr\Bigl(\sum_{i=0}^{2p-1}(\Imat-\Vm^\intercal\Vm)^i \Vm^\intercal\Am\Vm\Bigr),
\numberthis\label{eq:omm_original_form}
\end{align*}
which we will refer to by the OMM-$p$ objective,
also span the same top-$k$ eigensubspace. 
The original argument to prove this fact in \citep{Mauri--Galli--Car1993} is domain-specific and rather obscure.
Below, we provide a purely linear-algebraic and simple proof for the consistency of the OMM-$p$ objective.

\subsubsection{New Derivation with Simple Proof}
To introduce a more intuitive way to derive the general OMM objective without resorting to the Neumann series expansion,
we start from noting that we can rewrite the OMM objective function as
\[
\Lc_{\omm}(\Vm)
=-\tr((2\Imat_k - \Vm^\intercal\Vm)\Vm^\intercal\Am\Vm)
=\tr((\Imat_d-\Vm\Vm^\intercal)^2 \Am)-\tr(\Am).
\]
Hence, Eq.~\eqref{eq:omm} is equivalent to minimizing $\tr((\Imat_d-\Vm\Vm^\intercal)^2 \Am)$.
With this reformulation, we can extend the optimization problem in Eq.~\eqref{eq:omm} to a higher order as
\begin{align*}
\Lc_{\omm}^{(p)}(\Vm)
\defeq\tr((\Imat_d-\Vm\Vm^\intercal)^{2p} \Am)-\tr(\Am)
\numberthis\label{eq:omm_higher_order}
= \tr\biggl(
\biggl\{
\sum_{j=1}^{2p} (-1)^j \binom{2p}{j}(\Vm^\intercal\Vm)^{j-1}
\biggr\} \Vm^\intercal\Am\Vm
\biggr)
\end{align*}
for an integer $p\ge 1$. 
While it looks different from the expression in Eq.~\eqref{eq:omm_original_form}, we can show that they are exactly same, i.e., $\Lc_{\omm}^{(p)}(\Vm)=\tilde{\Lc}_{\omm}^{(p)}(\Vm)$; we include its proof in Appendix~\ref{app:proofs} for completeness.
As stated below, the global minima of the OMM-$p$ objective characterizes the desired top-$k$ eigensubspace for any $p\ge 1$.
Here we outline the main idea, leaving the rigorous proof to Appendix~\ref{app:proofs}.
\begin{theorem}[Consistency of the OMM-$p$ objective]
\label{thm:main}
Let $r\le d$ denote the rank of a PSD matrix $\Am$ with EVD $\Am=\sum_{i=1}^r \lambda_i \wv_i\wv_i^\intercal$.
Then,
\[
\min_{\Vm\in\Real^{d\times k}} \Lc_{\omm}^{(p)}(\Vm)
= -\sum_{i=1}^{k\wedge r} \lambda_{i}.
\]
In particular, its minimizer $\Vm^\star$ satisfies $\Vm_{1:k\wedge r}^\star(\Vm_{1:k\wedge r}^\star)^\intercal=\Wm_{1:k \wedge r}\Wm_{1:k\wedge r}^\intercal$.
\end{theorem}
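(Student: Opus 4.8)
The plan is to reduce the entire statement to the reformulation recorded in the excerpt, namely that $\Lc_{\omm}^{(p)}(\Vm)=\tr\bigl((\Imat_d-\Vm\Vm^\intercal)^{2p}\Am\bigr)-\tr(\Am)$, so that minimizing $\Lc_{\omm}^{(p)}$ over $\Vm\in\Real^{d\times k}$ is the same as minimizing $g(\Vm)\defeq\tr\bigl((\Imat_d-\Vm\Vm^\intercal)^{2p}\Am\bigr)$. (That this form coincides with $\tilde{\Lc}_{\omm}^{(p)}$ in Eq.~\eqref{eq:omm_original_form} is a routine binomial expansion of $(\Imat_d-\Vm\Vm^\intercal)^{2p}$ inside the trace together with cyclicity, which I would relegate to the appendix.) The only structural fact I need about the feasible set is that $\Pm\defeq\Vm\Vm^\intercal$ sweeps out \emph{all} positive-semidefinite matrices of rank at most $k$ as $\Vm$ ranges over $\Real^{d\times k}$; in particular the eigenvalues of $\Pm$ are unconstrained, so the argument must not presume that $\Pm$ is an orthogonal projection, which is precisely the point where the Neumann-series heuristic lacks an a priori justification.

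The heart of the proof is one operator inequality. Let $\Pi$ be the orthogonal projection onto the column space of $\Pm=\Vm\Vm^\intercal$, a subspace of dimension at most $k$; I claim $(\Imat_d-\Vm\Vm^\intercal)^{2p}\succeq\Imat_d-\Pi$. To see this, split $\Real^d=(\ker\Pm)\oplus(\ker\Pm)^\perp$ into orthogonal subspaces that are simultaneously invariant under $\Pm$, under $\Imat_d-\Pm$, and under $\Pi$ (by symmetry of $\Pm$, $(\ker\Pm)^\perp$ is exactly its column space): on $\ker\Pm$ both sides act as the identity, while on $(\ker\Pm)^\perp$ the right-hand side vanishes and the left-hand side equals $(\Imat_d-\Pm)^{2p}\succeq0$ because $2p$ is even. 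Both operators are block-diagonal for this splitting, so the blockwise inequality lifts to all of $\Real^d$. Pairing with $\Am\succeq0$, using monotonicity of $\Mm\mapsto\tr(\Mm\Am)$ on the positive-semidefinite cone, and then Ky~Fan's maximal principle for the rank-$\le k$ projection $\Pi$ gives
\[
g(\Vm)\ge\tr\bigl((\Imat_d-\Pi)\Am\bigr)=\tr(\Am)-\tr(\Pi\Am)\ge\tr(\Am)-\sum_{i=1}^{k\wedge r}\lambda_i,
\]
and subtracting $\tr(\Am)=\sum_{i=1}^r\lambda_i$ yields $\Lc_{\omm}^{(p)}(\Vm)\ge-\sum_{i=1}^{k\wedge r}\lambda_i$ for every $\Vm$.

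For attainment, take $\Vm\in\Real^{d\times k}$ whose first $k\wedge r$ columns are $\wv_1,\ldots,\wv_{k\wedge r}$ and whose remaining columns (present only when $k>r$) are zero; then $\Imat_d-\Vm\Vm^\intercal$ annihilates each $\wv_i$ with $i\le k\wedge r$, so $g(\Vm)=\sum_{i=k\wedge r+1}^r\lambda_i$ and the bound is attained, establishing the claimed minimum value. For the ``in particular'' clause I would trace the two displayed inequalities back to their equality cases: equality in Ky~Fan forces $\operatorname{im}(\Vm\Vm^\intercal)$ to contain a top-$(k\wedge r)$ eigenspace of $\Am$, and then equality in the monotonicity step forces $(\Imat_d-\Vm\Vm^\intercal)^{p}\wv_i=0$, hence $\Vm\Vm^\intercal\wv_i=\wv_i$, for the $k\wedge r$ eigenvectors spanning that eigenspace. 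Since $\rank(\Vm\Vm^\intercal)\le k$, this yields the stated identity $\Vm_{1:k\wedge r}^\star(\Vm_{1:k\wedge r}^\star)^\intercal=\Wm_{1:k\wedge r}\Wm_{1:k\wedge r}^\intercal$, modulo the usual rotational freedom in the choice of $\Wm_{1:k\wedge r}$ when the relevant eigenvalues are repeated.

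I expect the main obstacle to be handling the two degenerate regimes uniformly: columns of $\Vm$ that are far from orthonormal (so $\Vm\Vm^\intercal$ has eigenvalues outside $[0,1]$ and the Neumann expansion fails to make sense), and the rank-deficient case $k>r$ (where only an $r$-dimensional block of $\Vm\Vm^\intercal$ is determined by the objective). The inequality $(\Imat_d-\Vm\Vm^\intercal)^{2p}\succeq\Imat_d-\Pi$ is designed precisely to dispose of both at once and to reduce everything to the classical extremal principle for orthogonal projections; the remaining pieces --- the binomial identity $\Lc_{\omm}^{(p)}=\tilde{\Lc}_{\omm}^{(p)}$ and the equality analysis --- are then essentially bookkeeping.
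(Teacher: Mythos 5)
Your proposal is correct and follows essentially the same route as the paper: your key operator inequality $(\Imat_d-\Vm\Vm^\intercal)^{2p}\succeq \Imat_d-\Pi$ is exactly the paper's reduced-SVD decomposition $(\Imat_d-\Vm\Vm^\intercal)^{2p}=\Pm(\Imat_k-\Sigma)^{2p}\Pm^\intercal+\Pm_\perp\Pm_\perp^\intercal$ restated as a Loewner bound, after which both arguments use nonnegativity of the trace pairing of PSD matrices and the Ky Fan extremal principle for rank-$\le k$ projections, with matching equality analysis. The differences (kernel/range block splitting instead of explicit SVD reparameterization, plus your explicit attainment construction) are cosmetic.
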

\begin{proof}[Proof sketch]
The proof crucially relies on Eq.~\eqref{eq:omm_higher_order}, which shows that the objective depends on $\Vm$ only through the PSD matrix $\Vm\Vm^\intercal\in\Real^{d\times d}$ of rank at most $k$.
Hence, we can reparameterize the optimization problem with a matrix $\Pm\in\Real^{d\times k}$ such that $\Pm^\intercal \Pm$ and a diagonal matrix $\Sigma\in\Real^{k\times k}$ with nonnegative entries, by considering the reduced SVD of $\Vm\Vm^\intercal = \Pm\Sigma\Pm^\intercal$. 
We can then show that $\tr((\Imat_d-\Vm\Vm^\intercal)^{2p}\Am)\ge \sum_{i=k+1}^d \lambda_i$, and the equality is achieved if and only if $\Vm\Vm^\intercal=\Wm\Wm^\intercal$. 
\end{proof}

\subsubsection{Discussions}
\label{sec:discussions}
\textbf{Can We Apply OMM to Non-PSD Matrices?}
Consider a rank-1 case with $p=1$, i.e., $\Lc_{\omm}^{(1)}(\vv)= \tr((\Imat - \vv\vv^\intercal)^2\Am)-\tr(\Am)$.
Suppose that $\Am$ has a negative eigenvalue $\lambda_{j}$ with a normalized eigenvector $\wv_{j}$.
If we restrict $\vv=c\wv_{j}$ for some $c\in\Real$, it is easy to show that we can simplify the objective as $
\Lc_{\omm}^{(1)}(c\wv_{j})
=(1-c^2)^2\lambda_{j}-\lambda_j$.
Since $\lambda_{j}<0$, the objective will diverge to negative infinity when $c^2\to\infty$.
This explains why the OMM cannot be applied to non-PSD symmetric matrices.
If a lower bound on the smallest eigenvalue is known a priori, \ie, $\lambda_d \ge -\kappa$, we can \emph{shift} the spectrum by considering the positive semidefinite matrix $\Am + \kappa \Imat_d$, to which the OMM can then be applied.

\textbf{Equivalence of Regularization and Spectrum Shift.}
The OMM implicitly enforces orthogonality without requiring any explicit regularization. Nevertheless, one might be tempted to introduce an additional regularization term that promotes the orthonormality of $\Vm$, i.e., $\Vm^\intercal \Vm = \Imat_k$. If we adopt a squared-Frobenius-norm penalty $\kappa\|\Vm^\intercal \Vm - \Imat_k\|_{\mathsf{F}}^2$ to the OMM-1 objective, it is straightforward to verify that this is equivalent to applying the OMM-1 to the spectrum-shifted matrix $\Am + \kappa \Imat_d$. This reveals a peculiar property of the OMM-1: unlike standard regularization techniques that modify the optimal solution to enhance stability, the OMM-1 with this form of regularization preserves the global optima.

\textbf{Connection to PCA.}
For a PSD matrix $\Am\in\Real^{d\times d}$, consider a random vector $\xv\in\Real^d$ with mean zero and covariance $\E_{p(\xv)}[\xv\xv^\intercal]=\Am$.
Then, we can express
$\tr((\Imat_d-\Vm\Vm^\intercal)^2 \Am)
= \E_{p(\xv)}[\|\xv-\Vm\Vm^\intercal\xv\|^2].$
This is comparable to the standard characterization of PCA:
\begin{align}
\label{eq:pca}
\min_{\Vm\in \Real^{d\times k}} \E_p[\|\xv-\Vm(\Vm^\intercal \Vm)^{-1}\Vm^\intercal\xv\|^2].
\end{align}
If $\Vm\in\Real^{d\times k}$ were orthogonal, then $\Vm\Vm^\intercal\xv$ is the best projection of $\xv$ onto the column subspace of $\Vm$, and the OMM objective could be interpreted as the corresponding approximation error.
It is worth emphasizing that, even without explicitly enforcing the orthogonality constraint on $\Vm \in \Real^{d \times k}$, the global optima of the OMM remain unchanged, in sharp contrast to the common understanding that the whitening operation $(\Vm^\intercal\Vm)^{-1}$ in Eq.~\eqref{eq:pca} is required to characterize the top-$k$ eigensubspace.

\subsubsection{Nesting for Learning Ordered Eigenvectors}
To learn the ordered eigenvectors, we can apply the idea of \emph{nesting} in \citep{Ryu--Xu--Erol--Bu--Zheng--Wornell2024}, which was originally proposed for their low-rank approximation (LoRA) objective; see Section~\ref{sec:lora} for the comparison of the LoRA and OMM.
The same logic applies to the OMM.
There exist two versions, joint nesting and sequential nesting, which we explain below.

The \emph{joint nesting} aims to minimize a single objective $\Lc_{\omm}^{(p)}(\Vm;\weights)\defeq \sum_{i=1}^k \a_i\Lc_{\omm}^{(p)}(\Vm_{1:i})$ for a choice of positive weights $\a_1,\ldots,\a_k>0$, aiming to solve the OMM problem for $\Vm_{1:i}$ for each $i\in[k]$.
When $p=1$, we can use the same masking technique of \citep{Ryu--Xu--Erol--Bu--Zheng--Wornell2024}, without needing to compute the objective going over a for loop over $i\in[k]$. 
Similar to \citep[Theorem~3.3]{Ryu--Xu--Erol--Bu--Zheng--Wornell2024}, we have:
\begin{theorem}
\label{thm:jnt_nesting}
Let
$\Vm^\star\in\Real^{d\times k}$ be a global minimizer of 
$\Lc_{\omm}^{(p)}(\Vm;\weights)$.
For any positive weights $\weights\in\Real_{>0}^k$, if the top-$(k+1)$ eigenvalues are all distinct,
$\Vm^\star=\Wm$.
\end{theorem}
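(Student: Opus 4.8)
The plan is to exploit that the joint objective $\Lc_{\omm}^{(p)}(\Vm;\weights)=\sum_{i=1}^k \a_i\Lc_{\omm}^{(p)}(\Vm_{1:i})$ is a \emph{positive} combination of ordinary OMM-$p$ objectives---the $i$-th summand being the OMM-$p$ objective with target rank $i$ for the same matrix $\Am$---and that all of these summands can be driven to their individual minima \emph{simultaneously} by the single nested choice $\Vm=\Wm_{1:k}$. First I would record that distinctness of the top-$(k+1)$ eigenvalues gives $\lambda_1>\cdots>\lambda_{k+1}\ge 0$, so $\lambda_k>0$ and hence $r\ge k$; this makes $i\wedge r=i$ for every $i\le k$, so Theorem~\ref{thm:main} applies cleanly to each summand and yields the termwise lower bound
\[
\Lc_{\omm}^{(p)}(\Vm;\weights)\ \ge\ -\sum_{i=1}^k \a_i\sum_{j=1}^i \lambda_j
\qquad\text{for all }\Vm\in\Real^{d\times k}.
\]
Then I would verify that $\Vm=\Wm_{1:k}$ attains this bound: it makes $\Vm_{1:i}=\Wm_{1:i}$ for every $i$, so each summand equals its own minimum $-\sum_{j=1}^i\lambda_j$ by Theorem~\ref{thm:main}. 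In particular a global minimizer exists and the minimum value equals the right-hand side above.

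The crux is to show the only minimizers are $\pm\Wm_{1:k}$ (an independent sign per column). Let $\Vm^\star$ be a global minimizer. Since every $\a_i>0$, each summand is at least $-\sum_{j=1}^i\lambda_j$, and the weighted sum meets its lower bound, each summand must equal $-\sum_{j=1}^i\lambda_j$; that is, $\Vm^\star_{1:i}$ is a global minimizer of the rank-$i$ OMM-$p$ objective for every $i\in[k]$. Now I would use the spectral gap $\lambda_i>\lambda_{i+1}$ (which holds for every $i\le k$ under the hypothesis) together with the minimizer characterization in Theorem~\ref{thm:main} to get $\Vm^\star_{1:i}(\Vm^\star_{1:i})^\intercal=\Wm_{1:i}\Wm_{1:i}^\intercal$ for each $i$. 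Being the ``square root'' of a rank-$i$ orthogonal projection, $\Vm^\star_{1:i}$ then has orthonormal columns spanning $\spn\{\wv_1,\ldots,\wv_i\}$. Comparing the cases $i$ and $i-1$: the $i$-th column $\vv^\star_i$ lies in $\spn\{\wv_1,\ldots,\wv_i\}$, is orthogonal to $\vv^\star_1,\ldots,\vv^\star_{i-1}$, and these span $\spn\{\wv_1,\ldots,\wv_{i-1}\}$; hence $\vv^\star_i$ lies in the one-dimensional space $\spn\{\wv_1,\ldots,\wv_i\}\cap\spn\{\wv_1,\ldots,\wv_{i-1}\}^\perp=\spn\{\wv_i\}$ and has unit norm, so $\vv^\star_i=\pm\wv_i$. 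Iterating from $i=1$ gives $\Vm^\star=\Wm_{1:k}$ up to per-column signs, which is the asserted identity $\Vm^\star=\Wm$ (with $\Wm$ abbreviating $\Wm_{1:k}$).

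The main obstacle, beyond bookkeeping, is this last step: converting the \emph{chain} of nested range identities $\mathrm{range}(\Vm^\star_{1:i})=\spn\{\wv_1,\ldots,\wv_i\}$, $i=1,\ldots,k$, into statements about individual columns---and it is precisely here that distinctness of the top eigenvalues is indispensable, since without a gap at position $i$ the rank-$i$ minimizer could have \emph{any} $i$-dimensional range inside a degenerate top eigenspace, so neither the projection identity nor the column-by-column extraction would follow. A minor point to get right is that the minimizer claim in Theorem~\ref{thm:main} is unambiguous only in the presence of the relevant gap, which is why the hypothesis has to reach down to $\lambda_{k+1}$ (needed to pin down the $i=k$ summand) and not merely to $\lambda_k$. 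The overall argument is the OMM counterpart of \citep[Theorem~3.3]{Ryu--Xu--Erol--Bu--Zheng--Wornell2024}, with the OMM-$p$ consistency result (Theorem~\ref{thm:main}) in the role played there by the LoRA-consistency result.
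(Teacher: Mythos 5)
Your proposal is correct and follows essentially the same route as the paper's proof: positivity of the weights (plus the fact that $\Wm_{1:k}$ minimizes every summand simultaneously) forces each $\Vm^\star_{1:i}$ to minimize its own OMM-$p$ objective, Theorem~\ref{thm:main} then gives the nested projection identities $\Vm^\star_{1:i}(\Vm^\star_{1:i})^\intercal=\Wm_{1:i}\Wm_{1:i}^\intercal$, and your column-by-column range/orthogonality extraction is just the paper's ``telescoping'' step written out explicitly. You are somewhat more careful than the paper on the side conditions (checking $r\ge k$ so that $i\wedge r=i$, and noting the per-column sign ambiguity that the paper's ``$\vv_i=\wv_i$'' glosses over), but the argument is the same.
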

The proof is straightforward: if the objective can have the minimum possible value if and only if $\Lc_{\omm}^{(p)}(\Vm_{1:i})$ is minimized by satisfying $\Vm_{1:i}\Vm_{1:i}^\intercal = \Wm_{1:i}\Wm_{1:i}^\intercal$ for each $i\in[k]$, which is equivalent to $\vv_i=\wv_i$ for each $i$.\footnote{Here, the strict spectral gap is assumed for simplicity. 
With degenerate eigenvalues, the columns corresponding to the degenerate part of an optimal $\Vm^\star$ will be an orthonormal eigenbasis of the degenerate eigensubspace.}
Per the suggestion in \citep{Ryu--Xu--Erol--Bu--Zheng--Wornell2024}, we use the uniform weighting $\alphav=(\frac{1}{k},\ldots,\frac{1}{k})$ throughout.

The idea of \emph{sequential nesting} is to iteratively update $\vv_i$, using its gradient of $\Lc_{\omm}^{(1)}(\Vm_{1:i})$, i.e.,
\[
\nabla_{\vv_i} \Lc_{\omm}^{(1)}(\Vm_{1:i})
= -2\Bigl((\Imat_d - \Vm_{1:i}\Vm_{1:i}^\intercal)
\Am\vv_i
+\Am(\Imat_d-\Vm_{1:i}\Vm_{1:i}^\intercal)
\vv_i\Bigr),
\numberthis\label{eq:omm_seq_grad}
\]
as if $\Vm_{1:i-1}=[\vv_1,\ldots,\vv_{i-1}]$ already converged to the top-$(i-1)$ eigensubspace, for each $i\in[k]$. 
The intuition for convergence is based on an inductive argument.

We refer to the nested variants of OMM collectively as \emph{NestedOMM}, and denote them by OMM$_{\mathsf{jnt}}$ and OMM$_{\mathsf{seq}}$ for brevity.
We note that the OMM$_{\mathsf{seq}}$ for finite-dimensional matrices was proposed as the \emph{triangularized orthogonalization-free method (OFM)} in the numerical linear algebra literature~\citep{Gao--Li--Lu2022}.

\subsubsection{Connection to Sanger's Algorithm for Streaming PCA}
There is a rather separate line of literature on streaming PCA with long history, which aims to solve the essentially same eigenproblem for a finite-dimensional case.
The goal of the streaming PCA problem is to find the leading eigenvectors of the covariance matrix $\Am\gets \E_{p(\xv)}[\xv\xv^\intercal]$ of a zero-mean random variable $\xv\sim p(\xv)$. In the streaming case, we are restricted to receive a minibatch of independent and identically distributed (i.i.d.) samples $\{\xv_1^{(t)},\ldots,\xv_B^{(t)}\}$ at each time $t$, and can thus only compute the empirical estimate $\Am_t\defeq \frac{1}{B}\sum_{b=1}^B \xv_b^{(t)}(\xv_b^{(t)})^\intercal$. 
In this setup, \citet{Sanger1989} proposed the following iterative update procedure, often called \emph{Sanger's rule} or \emph{generalized Hebbian algorithm}~\citep{Gemp--McWilliams--Vernade--Graepel2021,Chapman--Wells--Aguila2024}: at each time step $t$, for each $i=1,\ldots,k$,
\begin{align*}
{\vv}_{i}^{(t+1)}\gets {\vv}_{i}^{(t)} + \eta_t \bigl(\Imat - 
\Vm_{1:i}^{(t)}(\Vm_{1:i}^{(t)})^\intercal
\bigr)\Am_t{\vv}_{i}^{(t)}.
\end{align*}
We remark that the OMM gradient in Eq.~\eqref{eq:omm_seq_grad} is derived from the well-defined objective and can be viewed as the \emph{symmetrized} version of the Sanger update term, while the update term itself cannot be viewed as a gradient of any function, as its Jacobian is not symmetric; see \citep[Proposition~K.2]{Gemp--McWilliams--Vernade--Graepel2021}.
With this connection between the OMM and Sanger's rule, as a practical variant, we can also treat the Sanger update $(\Imat - 
\Vm_{1:i}^{(t)}(\Vm_{1:i}^{(t)})^\intercal)\Am_t{\vv}_{i}^{(t)}$ as a pseudo-gradient and plug-in to an off-the shelf gradient-based optimization algorithm, such as Adam~\citep{Kingma--Ba2014}.
Interestingly, in our experiment with operator learning, the Sanger variant works well in one PDE example where the OMM exhibits numerical instability. 
We attribute the instability of the OMM gradient to statistical noise, which may make the operator appear to have eigenfunctions with negative eigenvalues, thus triggering the divergent behavior discussed in Section~\ref{sec:discussions}. See Section~\ref{sec:exps_tise} for a resolution and further discussion.

We note that this connection is rather surprising, given the extensive body of work over the past few decades on developing efficient streaming spectral decomposition algorithms under orthogonality constraints; see, e.g.,~\citep{Oja1982,Krasulina1969,Sanger1989,Machado--Rosenbaum--Guo--Liu--Tesauro--Campbell2017,Tang2019,Gemp--McWilliams--Vernade--Graepel2021,Gemp--McWilliams--Vernade--Graepel2022,Wang--Zhou--Zhang--Shao--Hooi--Feng2021}. 
It is remarkable that a classical yet simple idea originating from computational chemistry provides such an elegant solution to this long-studied spectral decomposition problem---yet, to the best of our knowledge, it has remained largely unnoticed in the modern machine learning literature.

\subsection{Orbital Minimization Method: Operator Version}
In many applications, we are often tasked to compute the leading eigenfunctions of a \emph{positive-semidefinite operator}  
$\Top \colon \hilbert{\rho}{\Xc} \to \hilbert{\rho}{\Xc}$.  
Here, $\hilbert{\rho}{\Xc}  
\triangleq  
\{ f \colon \Xc \to \Real  
|  
\int_{\Xc} f(\xv)^{2} \rho(d\xv) < \infty  \}$  
is a Hilbert space equipped with the inner product  
$\langle f_0, f_1 \rangle_{\rho}  
\triangleq  
\int_{\Xc} f_0(\xv) f_1(\xv) \rho(d\xv)$.  
By the spectral theorem, if $\Top$ is compact, then there exists a sequence of orthonormal eigenfunctions  
$\{f_i\}_{i \ge 1}$  
with non‑increasing eigenvalues  
$\lambda_1 \ge \lambda_2 \ge \dotsb \ge 0$.  
When $\Xc = \{1, \ldots, d\}$ and $\rho$ is the counting measure,  
the space $\hilbert{\rho}{\Xc}$ is naturally identified with $\Real^{d}$,  
and $\Top$ reduces to a $d \times d$ PSD matrix,  
recovering the standard finite‑dimensional eigenvalue problem in the previous section.  

We can easily extend the OMM objective in Eq.~\eqref{eq:omm_higher_order} to this infinite-dimensional case, expressing the objective compactly using only $k\times k$ matrices.
Note that the objective in Eq.~\eqref{eq:omm_higher_order} is a function of the \emph{overlap matrix} $\Vm^\intercal\Vm$ and \emph{projected matrix} $\Vm^\intercal\Am\Vm$. 
In the infinite-dimensional function case, the overlap matrix and the projected matrix become the second moment matrices $\secondmoment{\rho}{\fv}\defeq\int \fv(\xv)\fv(\xv)^\intercal\rho(d\xv)$ and $\secondmoment{\rho}{\fv,\Top\fv}\defeq \int \fv(\xv)(\Top\fv)(\xv)^\intercal\rho(d\xv)$, respectively.
Hence, by plugging-in this expression to Eq.~\eqref{eq:omm_higher_order}, we can easily derive the corresponding objective for the infinite-dimensional case:
\begin{align*}
\Lc_{\omm}^{(p)}(\fv)
&\defeq \tr\biggl(
\biggl\{
\sum_{j=1}^{2p} (-1)^j \binom{2p}{j}(\secondmoment{\rho}{\fv})^{j-1}
\biggr\} \secondmoment{\rho}{\fv,\Top\fv}
\biggr).
\numberthis
\label{eq:omm_higher_order_function}
\end{align*}
In particular, if $p=1$, it boils down to $\Lc_{\omm}^{(1)}(\fv)=-2 \tr(\secondmoment{\rho}{\fv,\Top\fv}) + \tr(\secondmoment{\rho}{\fv}\secondmoment{\rho}{\fv,\Top\fv})$.
If the point spectrum of $\Top$ is isolated and separated by a spectral gap from its essential spectrum, we can also argue that the global minimizer of the OMM objective corresponds to the top-$k$ eigensubspace.

\subsubsection{Nesting for Operator OMM}
As explained in the finite-dimensional case, the idea of sequential nesting is to update the $i$-th eigenfunction $f_i$ by the gradient $\partial_{f_i} \Lc_{\omm}^{(p)}(\mathbf{f}_{1:i})$ for each $i=1,\ldots,k$. 
We can implement these gradients succinctly via autograd for the special case of $p=1$.
In order to do so, we define a \emph{partially stop-gradient} second moment matrix 
\begin{align*}
\seqsecondmoment{\rho}{\fv,\gv}
\defeq 
\begin{bmatrix}
\langle f_1,g_1\rangle_{\rho} & \langle \stopgrad{f_1},g_2\rangle_{\rho} 
& \langle\stopgrad{f_1},g_3\rangle_{\rho}
& \cdots & \langle\stopgrad{f_1},g_k\rangle_{\rho} \\
\langle f_2,\stopgrad{g_1}\rangle_{\rho} & \langle f_2,g_2\rangle_{\rho} & \langle\stopgrad{f_2},g_3\rangle_{\rho}
& \cdots & \langle\stopgrad{f_2},g_k\rangle_{\rho} \\
\langle f_3,\stopgrad{g_1}\rangle_{\rho} & \langle f_3,\stopgrad{g_2}\rangle_{\rho} & \langle f_3,g_3\rangle_{\rho}
& \cdots & \langle\stopgrad{f_3},g_k\rangle_{\rho} \\
\vdots & \vdots & \ddots & \vdots \\
\langle f_k,\stopgrad{g_1}\rangle_{\rho} & \langle f_k,\stopgrad{g_2}\rangle_{\rho} & \langle f_k,\stopgrad{g_3}\rangle_{\rho} & \cdots & \langle f_k,g_k\rangle_{\rho}
\end{bmatrix}.
\end{align*}
Then, if we define the surrogate objective
\begin{align}
\label{eq:omm_seq_surrogate}
\Lc_{\omm}^{\mathsf{seq}}(\fv)\defeq -2\tr(\seqsecondmoment{\rho}{\fv,\Top\fv}) 
+ \tr(\seqsecondmoment{\rho}{\fv}\seqsecondmoment{\rho}{\fv,\Top\fv}),
\end{align}
then
$\partial_{f_i}\Lc_{\omm}^{\mathsf{seq}}(\mathbf{f}_{1:k})=\partial_{f_i} \Lc_{\omm}^{(1)}(\mathbf{f}_{1:i})$ for each $i\in[k]$.
In other words, the gradient for OMM with sequential nesting can be efficiently implemented by this \emph{single} surrogate objective function in Eq.~\eqref{eq:omm_seq_surrogate}. 
Note that the surrogate objective is equivalent to the OMM objective $\Lc_{\omm}^{(1)}(\mathbf{f}_{1:k})$ in its nominal value, but the gradients are different due to the stop-gradients.
The Sanger variant can be implemented similarly with autograd; see Appendix~\ref{app:implementation}.

This can be implemented efficiently with almost no additional computational overhead compared to computing $\secondmoment{\rho}{\fv,\gv}$ without nesting.
Note that the simple implementation is made possible thanks to the fact that $\tr(\secondmoment{\rho}{\fv}\secondmoment{\rho}{\fv,\Top\fv})=\langle \secondmoment{\rho}{\fv}, \secondmoment{\rho}{\fv,\Top\fv}\rangle$.
For $p>1$, however, we can no longer apply a similar, partial stop-gradient trick to the higher-order interaction terms $\tr(\secondmoment{\rho}{\fv}^{j-1}\secondmoment{\rho}{\fv,\Top\fv})$ for $j\ge 3$, as they cannot be simply written as a matrix inner product with stop-gradient terms. Thus, it is harder to utilize the advantages of high-order OMM in the nested case.
We thus only experiment with the higher-order extension in the self-supervised representation learning setting (Section~\ref{sec:exps_ssl}), where the ordered structure is not often necessary.

Similar to the argument in \citep{Ryu--Xu--Erol--Bu--Zheng--Wornell2024}, we can also implement OMM$_{\mathsf{jnt}}$ for operator with $p=1$ as follows:
\begin{align*}
\Lc_{\omm}^{\mathsf{jnt}}(\fv;\alphav)
\defeq \sum_{i=1}^k \weight_i\Lc_{\omm}^{(1)}(\fv_{1:i})
=\tr\Bigl(\Pm\odot \Bigl(-2\secondmoment{\rho}{\fv,\Top\fv} 
+ \secondmoment{\rho}{\fv}\secondmoment{\rho}{\fv,\Top\fv}\Bigr)\Bigr).
\end{align*}
Here, we define the matrix mask $\Pm\in\Real^{k\times k}$ as $\Pm_{ij}=\mm_{\max\{i,j\}}$ with $\mm_i \defeq \sum_{j=i}^k \weight_j$ and $\odot$ denotes the entrywise multiplication.
While \citet{Ryu--Xu--Erol--Bu--Zheng--Wornell2024} suggest to use joint nesting for jointly parameterized eigenfunctions, we have empirically found that the convergence with sequential nesting is comparable to or sometimes better than joint nesting even with joint parameterization in general.

\subsubsection{Comparison to Low-Rank Approximation}
\label{sec:lora}
As alluded to earlier, a closely related approach is the low-rank-approximation (LoRA) approach based on the Eckart--Young--Mirsky theorem~\citep{Eckart--Young1936,Mirsky1960} (or Schmidt theorem~\citep{Schmidt1907}). 
This approach has been widely studied in both numerical linear algebra~\citep{Liu--Wen--Zhang2015,Wen--Yang--Liu--Zhang2016,Gao--Li--Lu2022} and machine learning~\citep{Wang--Wu--Huang--Zheng--Xu--Zhang--Huang2019,HaoChen--Wei--Gaidon--Ma2021,Ryu--Xu--Erol--Bu--Zheng--Wornell2024,Xu--Zheng2024,Kostic--Pacreau--Turri--Novelli--Lounici--Pontil2024neurips,Jeong--Ryu--Yun--Wornell2025}.
For a self-adjoint operator \(\Top\), LoRA seeks a rank-$k$ approximation minimizing the error in the Hilbert--Schmidt norm $\|\Top - \sum_{i=1}^k f_i \otimes f_i\|_{\mathsf{HS}}^2 - \|\Top\|_{\mathsf{HS}}^2$,
which simplifies to:
\begin{align*}
\Lc_{\lora}(\fv) &\defeq -2\, \tr\big(\secondmoment{\rho}{\fv,\Top\fv}\big) + \tr\big(\secondmoment{\rho}{\fv} \secondmoment{\rho}{\fv}\big), \\
\Lc_{\omm}^{(1)}(\fv) &\defeq -2\, \tr\big(\secondmoment{\rho}{\fv,\Top\fv}\big) + \tr\big(\secondmoment{\rho}{\fv} \secondmoment{\rho}{\fv,\Top\fv}\big),
\end{align*}
where we repeat \(\Lc_{\omm}^{(1)}(\fv)\) for direct comparison.
While both objectives are unconstrained and admit unbiased gradient estimates with minibatch samples, 
they differ in their geometric and spectral interpretations. 
Let $\Top_k\defeq \sum_{i=1}^k \lambda_i \phi_i\otimes\phi_i$ denote the rank-$k$ truncation of $\Top$.
If $\Top$ is PSD, the optimizer $\fv_{\omm}^\star$ of the OMM objective satisfies $\sum_{i=1}^k f_{\omm,i}^\star\otimes f_{\omm,i}^\star=\sum_{i=1}^k \phi_i\otimes\phi_i$, 
whereas the LoRA optimizer satisfies $\sum_{i=1}^k f_{\lora,i}^\star \otimes f_{\lora,i}^\star = \Top_{k\wedge r_+}$, where $r_+$ is the number of positive eigenvalues of $\Top$, for any self-adjoint, compact $\Top$~\citep[Theorem C.5]{Ryu--Xu--Erol--Bu--Zheng--Wornell2024}.
In words, OMM approximates the projection operator onto the top-$k$ eigensubspace, whereas LoRA reconstructs the best rank-$k$ approximation of $\Top$ itself.
Moreover, the LoRA principle naturally extends to the singular value decomposition of general compact operators, leading to the objective $\Lc_{\lora}^{\mathsf{svd}}(\fv,\gv)=-2\tr(\secondmoment{\rho_0}{\fv,\Top\gv}) + \tr(\secondmoment{\rho_0}{\fv}\secondmoment{\rho_1}{\gv})$, as studied in \citep{Ryu--Xu--Erol--Bu--Zheng--Wornell2024}.
In contrast, extending the OMM to handle general operators is nontrivial.

Interestingly, as shown in Appendix~\ref{app:graph}, OMM can outperform LoRA in certain settings, particularly when the target matrix is sparse, such as the graph Laplacians of real-world networks, highlighting its robustness in structured problems.
We finally remark that sequentially nested formulations of both LoRA and OMM were previously proposed and analyzed for finite-dimensional matrices~\citep{Gao--Li--Lu2022}, yet their operator-level and learning-based generalizations remain largely unexplored.

\section{Experiments}
We evaluate the OMM and its variants (including the Sanger variant) across three experimental setups.
In the first two, the eigenfunctions are well defined with clear operational meaning, and estimating them in the order of eigenvalues is desirable. These results underscore the strong potential of NestedOMM for both modern machine learning applications and ML-based scientific simulations.
In the third setup, we explore the applicability of OMM in self-supervised representation learning, where preserving the underlying spectral structure is often secondary to optimizing downstream task performance.
Our PyTorch implementation is available at \url{https://github.com/jongharyu/operator-omm}.
We defer the experiment details and additional numerical results to Appendix~\ref{app:exps}.

\subsection{Laplacian Representation Learning for Reinforcement Learning}
We consider the representation learning problem in reinforcement learning (RL), which is known as the successor representation~\citep{Dayan1993,Machado--Rosenbaum--Guo--Liu--Tesauro--Campbell2017} or more recently the Laplacian representation~\citep{Gomez--Bowling--Machado2024}.
The high-level idea is that, given a transition kernel from an RL environment, we aim to find a good representation of a state in the given state space such that it reflects the intrinsic geometry of the environment.

More formally, we consider a discrete state space $\Sc=[N]$ for simplicity, but the treatment below can be easily extended to a continuous state space.
Suppose that we are given an environment $p(y|x,a)$, which denotes the transition probability from $s_t=x$ to $s_{t+1}=y$ given an action $a_t=a$, from an RL problem.
For a given policy $\pi(a|x)$, we consider the policy-dependent transition kernel defined as $p_\pi(y|x)\defeq \E_{\pi(a|x)}[p(y|x,a)]$. In matrix notation, we denote this by $\Pm_\pi\in\Real^{\Sc\times \Sc}$, where $(\Pm_{\pi})_{xy}\defeq p_\pi(y|x)$. 
In the literature~\citep{Wu--Tucker--Nachum2018,Wang--Zhou--Zhang--Shao--Hooi--Feng2021,Gomez--Bowling--Machado2024}, a \emph{Laplacian} is defined as any matrix $\Lm=\Imat-f(\Pm_\pi)$, where $f$ is some function that maps $\Pm_\pi$ to a symmetric matrix, such as $f(\Pm)\defeq \half(\Pm+\Pm^\intercal)$.
In the experiment below, $\Pm_\pi$ is constructed to be symmetric.
Like in the graph Laplacian, it is shown in the literature that the top-eigenvectors of $\Lm$ (or equivalently bottom-eigenvectors of $\Imat-\Lm=f(\Pm_\pi)$) well capture the geometry of the RL problem~\citep{Machado--Rosenbaum--Guo--Liu--Tesauro--Campbell2017,Wu--Tucker--Nachum2018}.

In this experiment, we compare NestedOMM (OMM$_{\mathsf{seq}}$ and OMM$_{\mathsf{jnt}}$) with a recently proposed algorithm called the \emph{augmented Lagrangian Laplacian objective} (ALLO) method~\citep{Gomez--Bowling--Machado2024}.
While the OMM involves no tunable hyperparameters, the ALLO requires selecting a barrier coefficient $b$ and a barrier growth rate $\a_{\text{barrier}}$ via grid search on a subset of grid environments, after which the chosen values are used for full-scale training with more epochs and transition samples.

We consider the same suite of experiments from \citep{Gomez--Bowling--Machado2024}, which consists of several grid environments as visualized in Appendix~\ref{app:exps_laplacian}.
For each environment, we generated $10^6$ transition samples from a uniform random policy and uniform initial state distribution, used the $(x, y)$ coordinates as inputs to a neural network, and aimed to learn the top $k=11$ eigenfunctions.
We trained for $80,000$ epochs using Adam~\citep{Kingma--Ba2014} with learning rate $10^{-3}$. Additionally, we reproduce the ALLO training with identical number of transition samples, epochs, and optimizer, with the suggested initial barrier coefficient $b = 2.0$ and $\alpha_{\text{barrier}} = 0.01$ reported in \citep{Gomez--Bowling--Machado2024}. 
We trained with NestedOMM for four random runs after shifting the spectrum by $\Imat$, and ALLO with 10 runs.
The performance is measured by the average of cosine similarities of each mode, where degenerate spaces are handled by an oracle knowing the degeneracy.

\begin{figure}[t]
    \centering
    \includegraphics[width=\linewidth]{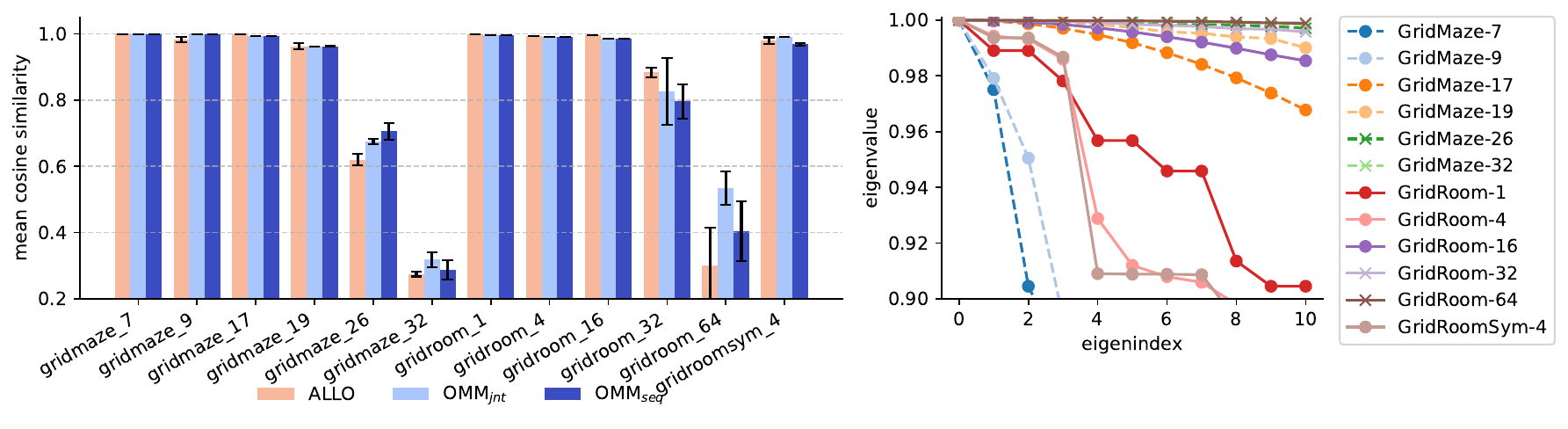}
    \caption{Summary of experimental results of the Laplacian representation learning. 
    On the right panel, we plot the top-11 eigenvalues of the Laplacians. It shows that
    the hard instances (i.e., GridMaze-\{26,32\} and GridRoom\{32,64\}) have very small spectral gaps (marked with \textsf{x}). Error bar indicates one standard deviation.}
    \label{fig:allo}
\end{figure}

We summarize the results in Figure~\ref{fig:allo}. 
Across different environments, both OMM$_{\mathsf{seq}}$ and OMM$_{\mathsf{jnt}}$ perform comparably to the complex optimization dynamics of ALLO, without any hyperparameter tuning.
To understand the failure cases, we plot the top-11 eigenvalues of the Laplacians on the right panel. It shows that these hard instances have very small spectral gaps, suggesting that a small cosine similarity does not necessarily indicate a failure of learning in such cases.

\begin{figure}[t]
    \centering
    \includegraphics[width=\linewidth]{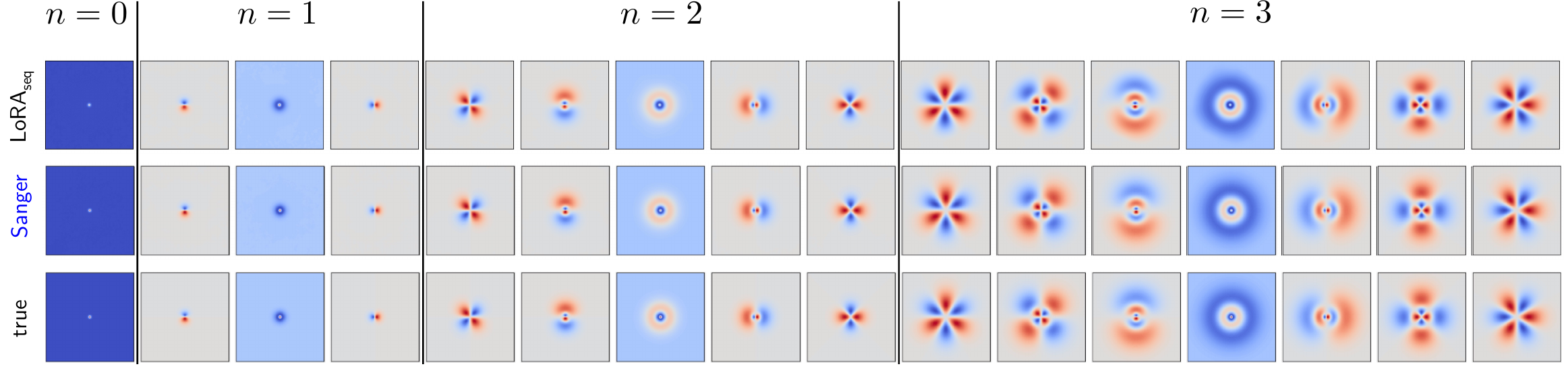}
    \caption{Visualization of learned eigenfunctions for the 2D hydrogen atom. 
    OMM$_{\mathsf{seq}}$ performs as well as LoRA$_{\mathsf{seq}}$, which is also known as NeuralSVD~\citep{Ryu--Xu--Erol--Bu--Zheng--Wornell2024}.}
    \label{fig:hydrogen}
\end{figure}

\subsection{Solving Schr\"odinger Equations}
\label{sec:exps_tise}
Following \citep{Ryu--Xu--Erol--Bu--Zheng--Wornell2024},
we now consider applying OMM to solve some simple instances of time-independent Schr\"odinger's equation~\citep{Griffiths--Schroeter2018}.
The equation is $(\Hop\psi)(x)=\lambda\psi(x)$, where
$\Hop=-\nabla^2+V(\xv)$ is the Hamiltonian operator of a system, where $V(\xv)$ is a potential function.
Since low-energy eigenfunctions correspond to the most stable states, we are primarily interested in retrieving the smallest eigenvalues of $\Hop$. Accordingly, in our convention, we aim to retrieve the eigenvalues and eigenfunctions of the operator $\Top = -\Hop$ from the top.

\subsubsection{Bounded-Spectrum Case}
The OMM is directly applicable to certain problems, where the Hamiltonian $\Hop$ has eigenenergies bounded above, such that $\Top=-\Hop$ (or its shifted version) is PSD.
To assess the potential in such settings, we consider the simplest example of this kind: the two-dimensional hydrogen atom, where the potential function is given by $V(\xv) = -\frac{1}{|\xv|_2}$ for $\xv \in \Real^2$~\citep{Pfau--Petersen--Agarwal--Barrett--Stachenfeld2019,Ryu--Xu--Erol--Bu--Zheng--Wornell2024}.
In this case, the true eigenvalues of $\Top$ are expressed as $\lambda_{n,\ell}=(2n+1)^{-2}$ with two quantum numbers $n\ge 0$ and $-n\le \ell\le n$.

We compare the performance of OMM$_{\mathsf{seq}}$ to LoRA$_{\mathsf{seq}}$. 
We first note that we empirically found that the OMM applied to $-\Hop$ leads to unstable training. 
We conjecture that this instability is due to the fast-decaying spectrum and statistical noise during minibatch optimization. 
That is, in the beginning of optimization, if some small eigenvalue modes are captured by the model, and if the moment matrices formed by minibatch samples yield negative eigenvalues due to statistical noise,
the model may diverge, as we discussed in Section~\ref{sec:new_derivation}.
We found that this misbehavior of OMM in this example can be rectified by decomposing the shifted operator $\Top+\kappa\Iop$ for some $\kappa>0$, such that the smallest eigenvalue is bounded away from zero. 
Instead of reporting results with OMM$_{\mathsf{seq}}$ with identity shift, here we report the result with the Sanger variant. 
Rather surprisingly, we find that the Sanger variant does not exhibit optimization instability and performs on par with LoRA$_{\mathsf{seq}}$, which is shown to outperform existing baselines such as SpIN~\citep{Pfau--Petersen--Agarwal--Barrett--Stachenfeld2019} and NeuralEF~\citep{Deng--Shi--Zhu2022}. 
As shown in Figure~\ref{fig:hydrogen}, the learned eigenfunctions from both Sanger and LoRA$_{\mathsf{seq}}$ are well-aligned with the ground truth, showing the competitiveness of the OMM compared to the state-of-the-art approach.
Figure~\ref{fig:hydrogen_numerical} in Appendix~\ref{app:exps} provides the quantitative evaluation.

\subsubsection{Unbounded-Spectrum Case}
As alluded to earlier, the OMM is not directly applicable when the energy spectrum of $\Hop$ is unbounded above, as in the cases of the harmonic oscillator or the infinite well~\citep{Griffiths--Schroeter2018,Ryu--Xu--Erol--Bu--Zheng--Wornell2024}.\footnote{For instance, consider the Schr\"odinger equation under zero Dirichlet boundary conditions with the square infinite-well potential, $V(\xv)=0$ for $\xv\in[-1,1]^2$ and $V(\xv)=\infty$ otherwise. Its eigenvalues are given by $\lambda_{n_x,n_y}\propto n_x^2+n_y^2$, parameterized by quantum numbers $n_x,n_y\in\mathbb{N}$, which diverge as $n_x$ or $n_y\to\infty$.}
To render the OMM applicable in such settings, we introduce an \emph{inverse-operator trick}, analogous to that used in the \emph{inverse iteration}~\citep{Trefethen--Bau2022}.
Let $\Lop$ be a self-adjoint operator on a Hilbert space $\Hc$ with a purely discrete, positive spectrum $0<\lambda_1\le\lambda_2\le\cdots$ with $\lambda_i\to\infty$,
and corresponding orthonormal eigenfunctions $\{\phi_i\}_{i\ge1}$.
This assumption holds, for example, for Schr\"odinger operators on bounded domains with Dirichlet boundary conditions, 
or more generally, for confining potentials $V(\xv)\to\infty$ as $\|\xv\|\to\infty$, 
such as the harmonic oscillator on $\mathbb{R}^d$. 
In this case, $\Lop$ is invertible, and its inverse $\Lop^{-1}$ is a bounded, self-adjoint, and compact operator with eigenvalues $\lambda_i^{-1}\downarrow 0$ and the same eigenfunctions.
Consequently, the top-$k$ eigenfunctions of $\Lop^{-1}$ coincide with the bottom-$k$ eigenfunctions of $\Lop$.
Thus, applying the OMM to $\Lop^{-1}$ enables recovery of the lowest-$k$ modes of $\Lop$.
\footnote{When the spectrum includes a continuous component whose infimum is zero (as in unconfined systems), $\Lop^{-1}$ may not be bounded. In such cases, one may instead use
$(\Lop+\varepsilon\Iop)^{-1}$ with some $\varepsilon>0$.}

A practical difficulty is that $\Lop^{-1}$ is rarely available in closed form; for differential operators, it corresponds to an integration operator.
To avoid explicit inversion, we can parameterize $\fv \defeq \Lop\gv$
with $\gv$ represented by a neural network.
Substituting this into the OMM-1 objective yields
\[
\begin{aligned}
\Lc_{\omm}^{\mathsf{inv}}(\gv;\Lop)
&\triangleq -2\,\mathrm{tr}\!\big(\mathsf{M}_{\rho}[\fv,\Lop^{-1}\fv]\big)
+ \mathrm{tr}\!\big(\mathsf{M}_{\rho}[\fv]\;\mathsf{M}_{\rho}[\fv,\Lop^{-1}\fv]\big)\\
&= -2\,\mathrm{tr}\!\big(\mathsf{M}_{\rho}[\Lop\gv,\gv]\big)
+ \mathrm{tr}\!\big(\mathsf{M}_{\rho}[\Lop\gv]\;\mathsf{M}_{\rho}[\Lop\gv,\gv]\big),
\end{aligned}
\]
which preserves the same computational complexity as the original OMM.
Since the $i$-th eigenfunction of $\Lop^{-1}$ is $\phi_i$, $\Lop g_i$ corresponds to $\phi_i$, and ideally $g_i=\lambda_i^{-1}\phi_i$ holds for each $i\in[k]$.

In Appendix~\ref{app:schrodinger}, we empirically validate this approach on the 2D infinite-well and harmonic-oscillator systems, both of which admit analytical solutions. We find that while the OMM with the inverse-operator trick accurately recovers the top eigenfunctions, it can introduce numerical instabilities and requires careful hyperparameter tuning, likely due to the nature of the parametrization.

\subsection{Self-Supervised Contrastive Representation Learning}
\label{sec:exps_ssl}
We now apply the OMM for contrastive representation learning.
Here, we primarily consider self-supervised image representation learning using OMM, comparing to SimCLR~\citep{Chen--Kornblith--Norouzi--Hinton2020}, deferring its application to graph data to Appendix~\ref{app:graph}.
In this setup, we are given an unlabeled dataset of images $x\sim p(x)$, and our goal is to learn a network $\fv_\theta(\cdot)$ parametrized by $\theta$ such that it yields a useful representation for downstream tasks such as classification. 
Following SimCLR, we consider a random augmentation $p(t|x)$ and draw two random transformations $(t_1,t_2)\sim p(t_1|x)p(t_2|x)$ to produce two views of a data entry $x$. 
By repeating this procedure for each image, we obtain the full dataset, which can be viewed as samples from the joint distribution $p(t_1,t_2)\defeq \E_{p(x)}[p(t_1|x)p(t_2|x)]$. 

The key object is then the canonical dependence kernel (CDK) $k(t_1, t_2) \defeq \frac{p(t_1, t_2)}{p(t_1)p(t_2)}$, where $p(t)\defeq \E_{p(x)}[p(t|x)]$~\citep{Ryu--Xu--Erol--Bu--Zheng--Wornell2024}. 
Most contrastive learning approaches aim to learn $\fv_\theta(\cdot)$ such that it factorizes the log CDK (or pointwise mutual information) $\fv_\theta(t_1)^\intercal \fv_\th(t_2)\approx \log k(t_1,t_2)$~\citep{Levy--Goldberg2014acl,Levy--Goldberg2014neurips,van-den-Oord--Li--Vinyals2018}. 
Few exceptions include \citep{HaoChen--Wei--Gaidon--Ma2021,Xu--Zheng2024,Ryu--Xu--Erol--Bu--Zheng--Wornell2024}, which instead aim to factorize the CDK itself $\fv_\theta(t_1)^\intercal \fv_\th(t_2)\approx k(t_1,t_2)$, such that the optimal $\fv_\theta(t)$ can be interpreted as the low-rank approximation of the CDK.
Application of the OMM to CDK provides a rather unique way to learn representations, as it does not directly aim to approximate a function of the CDK, but rather trying to find \emph{normalized eigenfunctions} purely from a linear-algebraic point of view.
Note that we can apply the OMM framework in this \emph{symmetric} self-supervised setup, as the CDK is symmetric and PSD by construction.
The purpose of this experiment is not to establish the state-of-the-art performance, but to assess the potential.

We followed the standard setup of \citep{JMLR:v23:21-1155}. We used ResNet-18 as our backbone model and adopted two different feature encoding strategies: (1) use a nonlinear projector with $2048$ hidden dimensions and $k = 256$ output dimensions as our input to the OMM; (2) in a vein similar to DirectCLR~\citep{Jing--Vincent--LeCun--Tian2021}, take the top $k=64$ dimensions as the feature fed to the OMM objective without a projector. 
In all cases, we normalized features along the feature dimension before computing any loss following the standard convention. 
We report the results of top-\{1,5\} classification accuracy  with linear probe.

The results are summarized in Table~\ref{tab:merged-solo-learn-results}.
We first observe that the OMM-1 yields a reasonably effective representation for classification, achieving a top-1 accuracy of approximately $60\%$.
We then examined the effect of nesting on representation quality and observed no improvement, with a slight degradation in some cases.
Subsequently, we evaluated the effect of the higher-order OMM with $p=2$ as well as the mixed-order OMM combining $p \in \{1,2\}$.
Somewhat surprisingly, both variants substantially improved performance, reaching approximately $64\%$ top-1 accuracy.
In Appendix~\ref{app:higher_order}, we provide a theoretical explanation for the benefit of higher-order OMM, based on a gradient analysis. In short, the higher-order OMM gradient may provide an additional gradient to escape an immature flat local minima.
A similar trend is observed with the DirectCLR variant, and notably, the performance remains comparable even without a projector.
While these methods do not surpass the SimCLR baseline ($\sim 66.5\%$), the preliminary findings underscore the promise of this linear-algebraic representation learning framework and motivate further investigation.

\begin{table}[t]
    \centering
    \caption{Top-1 and top-5 classification accuracies (\%) for the SimCLR and OMM variants on CIFAR-100. 
    }\vspace{.5em}
    \small
    \begin{tabular}{lcccc}
    \toprule
    \multirow{2}{*}{\textbf{Method}} & \multicolumn{2}{c}{\textbf{With projector}} & \multicolumn{2}{c}{\textbf{DirectCLR (top-64 dim.)}} \\
    \cmidrule(lr){2-3} \cmidrule(lr){4-5}
    & \textbf{Top-1} & \textbf{Top-5} & \textbf{Top-1} & \textbf{Top-5} \\
    \midrule
    OMM $(p=1)$                   & 60.02 & 87.13 & 59.83 & 86.65 \\
    \midrule
    OMM$_{\mathsf{jnt}}$ $(p=1)$             & 61.30 & 85.22 & 59.92 & 85.13 \\ 
    OMM$_{\mathsf{seq}}$ $(p=1)$        & 59.91 & 87.02 & 52.96 & 81.22 \\
    \midrule
    OMM $(p=2)$                   & 63.92 & 89.08 & 61.27 & 87.07 \\
    OMM $(p=1)$ + OMM $(p=2)$           & 64.77 & 89.18 & 63.99 & 88.88 \\
    \midrule
    SimCLR~\citep{Chen--Kornblith--Norouzi--Hinton2020}    & 66.50 & 89.28 & N/A   & N/A   \\
    \bottomrule%
    \end{tabular}
    \label{tab:merged-solo-learn-results}
\end{table}

\section{Concluding Remarks}
In this paper, we revisited a classical optimization framework from computational chemistry for computing the top-$k$ eigensubspace of a PSD matrix.
Despite its appealing properties that align well with modern machine learning objectives, this approach remains underexplored in the current literature.
We hope that this work stimulates further research into this linear-algebraic perspective and inspires the development of more principled learning methods.
We also note that existing parametric approaches to spectral decomposition exhibit notable limitations, particularly a lack of theoretical understanding regarding their convergence properties compared to classical numerical methods.
We refer an interested reader to a general discussion on the potential advantage and limitation of the parametric spectral decomposition approach over classical methods in \citep{Ryu--Xu--Erol--Bu--Zheng--Wornell2024}.

The experimental results presented in this work are preliminary, and we believe that a more comprehensive investigation across diverse application domains could yield deeper insights and more impactful findings.
In particular, extending the current framework to research-level quantum excited-state computation problems, as recently explored via alternative variational principles~\citep{Pfau--Axelrod--Sutterud--von-Glehn--Spencer2024,Entwistle--Schatzle--Erdman--Hermann--Noe2023}, would be of significant theoretical and practical interest.
Furthermore, advancing the linear-algebraic perspective on modern representation learning may open up a pathway toward more structured, interpretable, and theoretically grounded representations.

\newpage
\section*{Acknowledgments}
This work was supported in part by the MIT-IBM Watson AI Lab under Agreement No. W1771646.

\bibliographystyle{plainnat}
\bibliography{ref}

\newcommand{\noopsort}[1]{} \newcommand{\noop}[1]{}
\begin{thebibliography}{56}
\providecommand{\natexlab}[1]{#1}
\providecommand{\url}[1]{\texttt{#1}}
\expandafter\ifx\csname urlstyle\endcsname\relax
  \providecommand{\doi}[1]{doi: #1}\else
  \providecommand{\doi}{doi: \begingroup \urlstyle{rm}\Url}\fi

\bibitem[Bowler and Miyazaki(2012)]{Bowler--Miyazaki2012}
David~R Bowler and Tsuyoshi Miyazaki.
\newblock Methods in electronic structure calculations.
\newblock \emph{Rep. Prog. Phys.}, 75\penalty0 (3):\penalty0 036503, 2012.

\bibitem[Chapman et~al.(2024)Chapman, Wells, and Aguila]{Chapman--Wells--Aguila2024}
James Chapman, Lennie Wells, and Ana~Lawry Aguila.
\newblock Unconstrained stochastic {CCA}: Unifying multiview and self-supervised learning.
\newblock In \emph{Int. Conf. Learn. Repr.}, 2024.
\newblock URL \url{https://openreview.net/forum?id=PHLVmV88Zy}.

\bibitem[Chen et~al.(2020)Chen, Kornblith, Norouzi, and Hinton]{Chen--Kornblith--Norouzi--Hinton2020}
Ting Chen, Simon Kornblith, Mohammad Norouzi, and Geoffrey Hinton.
\newblock A simple framework for contrastive learning of visual representations.
\newblock In \emph{Proc. Int. Conf. Mach. Learn.}, pages 1597--1607. PMLR, 2020.

\bibitem[da~Costa et~al.(2022)da~Costa, Fini, Nabi, Sebe, and Ricci]{JMLR:v23:21-1155}
Victor Guilherme~Turrisi da~Costa, Enrico Fini, Moin Nabi, Nicu Sebe, and Elisa Ricci.
\newblock solo-learn: A library of self-supervised methods for visual representation learning.
\newblock \emph{J. Mach. Learn. Res.}, 23\penalty0 (56):\penalty0 1--6, 2022.
\newblock URL \url{http://jmlr.org/papers/v23/21-1155.html}.

\bibitem[Dayan(1993)]{Dayan1993}
Peter Dayan.
\newblock Improving generalization for temporal difference learning: The successor representation.
\newblock \emph{Neural Comput.}, 5\penalty0 (4):\penalty0 613--624, 1993.

\bibitem[Deng et~al.(2022{\natexlab{a}})Deng, Shi, Zhang, Cui, Lu, and Zhu]{Deng--Shi--Zhang--Cui--Lu--Zhu2022}
Zhijie Deng, Jiaxin Shi, Hao Zhang, Peng Cui, Cewu Lu, and Jun Zhu.
\newblock {Neural Eigenfunctions Are Structured Representation Learners}.
\newblock \emph{arXiv}, October 2022{\natexlab{a}}.

\bibitem[Deng et~al.(2022{\natexlab{b}})Deng, Shi, and Zhu]{Deng--Shi--Zhu2022}
Zhijie Deng, Jiaxin Shi, and Jun Zhu.
\newblock {N}eural{EF}: Deconstructing kernels by deep neural networks.
\newblock In Kamalika Chaudhuri, Stefanie Jegelka, Le~Song, Csaba Szepesvari, Gang Niu, and Sivan Sabato, editors, \emph{Proc. Int. Conf. Mach. Learn.}, volume 162 of \emph{Proceedings of Machine Learning Research}, pages 4976--4992. PMLR, 17--23 Jul 2022{\natexlab{b}}.
\newblock URL \url{https://proceedings.mlr.press/v162/deng22b.html}.

\bibitem[Eckart and Young(1936)]{Eckart--Young1936}
Carl Eckart and Gale Young.
\newblock {The approximation of one matrix by another of lower rank}.
\newblock \emph{Psychometrika}, 1\penalty0 (3):\penalty0 211--218, September 1936.
\newblock ISSN 0033-3123, 1860-0980.
\newblock \doi{10.1007/BF02288367}.

\bibitem[Entwistle et~al.(2023)Entwistle, Sch{\"a}tzle, Erdman, Hermann, and No{\'e}]{Entwistle--Schatzle--Erdman--Hermann--Noe2023}
M~T Entwistle, Z~Sch{\"a}tzle, P~A Erdman, J~Hermann, and F~No{\'e}.
\newblock {Electronic excited states in deep variational Monte Carlo}.
\newblock \emph{Nat. Commun.}, 14\penalty0 (1):\penalty0 274, January 2023.
\newblock ISSN 2041-1723.
\newblock \doi{10.1038/s41467-022-35534-5}.

\bibitem[Gao et~al.(2022)Gao, Li, and Lu]{Gao--Li--Lu2022}
Weiguo Gao, Yingzhou Li, and Bichen Lu.
\newblock {Triangularized Orthogonalization-Free Method for Solving Extreme Eigenvalue Problems}.
\newblock \emph{J. Sci. Comput.}, 93\penalty0 (63):\penalty0 1--28, October 2022.
\newblock \doi{10.1007/s10915-022-02025-0}.

\bibitem[Gemp et~al.(2021)Gemp, McWilliams, Vernade, and Graepel]{Gemp--McWilliams--Vernade--Graepel2021}
Ian Gemp, Brian McWilliams, Claire Vernade, and Thore Graepel.
\newblock {Eigen{G}ame: {PCA} as a {N}ash equilibrium}.
\newblock In \emph{Int. Conf. Learn. Repr.}, 2021.

\bibitem[Gemp et~al.(2022)Gemp, McWilliams, Vernade, and Graepel]{Gemp--McWilliams--Vernade--Graepel2022}
Ian Gemp, Brian McWilliams, Claire Vernade, and Thore Graepel.
\newblock Eigen{G}ame unloaded: When playing games is better than optimizing.
\newblock In \emph{Int. Conf. Learn. Repr.}, 2022.
\newblock URL \url{https://openreview.net/forum?id=So6YAqnqgMj}.

\bibitem[Gomez et~al.(2024)Gomez, Bowling, and Machado]{Gomez--Bowling--Machado2024}
Diego Gomez, Michael Bowling, and Marlos~C. Machado.
\newblock Proper {L}aplacian representation learning.
\newblock In \emph{Int. Conf. Learn. Repr.}, 2024.
\newblock URL \url{https://openreview.net/forum?id=7gLfQT52Nn}.

\bibitem[Griffiths and Schroeter(2018)]{Griffiths--Schroeter2018}
David~J Griffiths and Darrell~F Schroeter.
\newblock \emph{Introduction to quantum mechanics}.
\newblock Cambridge university press, 2018.

\bibitem[HaoChen et~al.(2021)HaoChen, Wei, Gaidon, and Ma]{HaoChen--Wei--Gaidon--Ma2021}
Jeff~Z HaoChen, Colin Wei, Adrien Gaidon, and Tengyu Ma.
\newblock {Provable guarantees for self-supervised deep learning with spectral contrastive loss}.
\newblock In \emph{Adv. Neural Inf. Proc. Syst.}, 2021.

\bibitem[He et~al.(2016)He, Zhang, Ren, and Sun]{He--Zhang--Ren--Sun2016}
Kaiming He, Xiangyu Zhang, Shaoqing Ren, and Jian Sun.
\newblock Deep residual learning for image recognition.
\newblock In \emph{Proc. {IEEE} Comput. Soc. Conf. Comput. Vis. Pattern Recognit.}, pages 770--778, 2016.

\bibitem[Hinton et~al.(2012)Hinton, Srivastava, and Swersky]{Hinton--Srivastava--Swersky2012}
Geoffrey Hinton, Nitish Srivastava, and Kevin Swersky.
\newblock Neural networks for machine learning lecture 6a overview of mini-batch gradient descent.
\newblock \noop{}, 2012.

\bibitem[Hu et~al.(2020)Hu, Fey, Zitnik, Dong, Ren, Liu, Catasta, and Leskovec]{Hu--Fey--Zitnik--Dong--Ren--Liu--Catasta--Leskovec2020}
Weihua Hu, Matthias Fey, Marinka Zitnik, Yuxiao Dong, Hongyu Ren, Bowen Liu, Michele Catasta, and Jure Leskovec.
\newblock Open graph benchmark: Datasets for machine learning on graphs.
\newblock In \emph{Adv. Neural Inf. Proc. Syst.}, volume~33, pages 22118--22133, 2020.

\bibitem[Huang et~al.(2021)Huang, He, Singh, Lim, and Benson]{Huang--He--Singh--Lim--Benson2021}
Qian Huang, Horace He, Abhay Singh, Ser-Nam Lim, and Austin~R Benson.
\newblock Combining label propagation and simple models out-performs graph neural networks.
\newblock In \emph{Int. Conf. Learn. Repr.}, 2021.

\bibitem[Jeong et~al.(2025)Jeong, Ryu, Yun, and Wornell]{Jeong--Ryu--Yun--Wornell2025}
Minchan Jeong, J.~Jon Ryu, Se-Young Yun, and Gregory~W. Wornell.
\newblock Efficient parametric {SVD} of {K}oopman operator for stochastic dynamical systems.
\newblock In \emph{Adv. Neural Inf. Proc. Syst.}, 2025.
\newblock URL \url{https://openreview.net/forum?id=kL2pnzClyD}.

\bibitem[Jing et~al.(2022)Jing, Vincent, LeCun, and Tian]{Jing--Vincent--LeCun--Tian2021}
Li~Jing, Pascal Vincent, Yann LeCun, and Yuandong Tian.
\newblock Understanding dimensional collapse in contrastive self-supervised learning.
\newblock In \emph{Int. Conf. Learn. Repr.}, 2022.
\newblock URL \url{https://openreview.net/forum?id=YevsQ05DEN7}.

\bibitem[Kingma and Ba(2015)]{Kingma--Ba2014}
Diederik~P Kingma and Jimmy Ba.
\newblock Adam: A method for stochastic optimization.
\newblock In \emph{Int. Conf. Learn. Repr.}, 2015.

\bibitem[Kostic et~al.(2024{\natexlab{a}})Kostic, Novelli, Grazzi, Lounici, and Pontil]{Kostic--Novelli--Grazzi--Lounici--Pontil2024iclr}
Vladimir~R Kostic, Pietro Novelli, Riccardo Grazzi, Karim Lounici, and Massimiliano Pontil.
\newblock Learning invariant representations of time-homogeneous stochastic dynamical systems.
\newblock In \emph{Int. Conf. Learn. Repr.}, 2024{\natexlab{a}}.
\newblock URL \url{https://openreview.net/forum?id=twSnZwiOIm}.

\bibitem[Kostic et~al.(2024{\natexlab{b}})Kostic, Pacreau, Turri, Novelli, Lounici, and Pontil]{Kostic--Pacreau--Turri--Novelli--Lounici--Pontil2024neurips}
Vladimir~R Kostic, Gregoire Pacreau, Giacomo Turri, Pietro Novelli, Karim Lounici, and Massimiliano Pontil.
\newblock Neural conditional probability for uncertainty quantification.
\newblock In \emph{Adv. Neural Inf. Proc. Syst.}, 2024{\natexlab{b}}.
\newblock URL \url{https://openreview.net/forum?id=zXfhHJnMB2}.

\bibitem[Krasulina(1969)]{Krasulina1969}
T~P Krasulina.
\newblock {The method of stochastic approximation for the determination of the least eigenvalue of a symmetrical matrix}.
\newblock \emph{USSR Computational Mathematics and Mathematical Physics}, 9\penalty0 (6):\penalty0 189--195, January 1969.
\newblock ISSN 0041-5553.
\newblock \doi{10.1016/0041-5553(69)90135-9}.

\bibitem[Lagaris et~al.(1997)Lagaris, Likas, and Fotiadis]{Lagaris--Likas--Fotiadis1997}
I~E Lagaris, A~Likas, and D~I Fotiadis.
\newblock {Artificial neural network methods in quantum mechanics}.
\newblock \emph{Comput. Phys. Commun.}, 104\penalty0 (1-3):\penalty0 1--14, August 1997.
\newblock ISSN 0010-4655, 1879-2944.
\newblock \doi{10.1016/s0010-4655(97)00054-4}.

\bibitem[Levy and Goldberg(2014{\natexlab{a}})]{Levy--Goldberg2014acl}
Omer Levy and Yoav Goldberg.
\newblock Dependency-based word embeddings.
\newblock In \emph{Proceedings of the 52nd Annual Meeting of the Association for Computational Linguistics (Volume 2: Short Papers)}, pages 302--308, 2014{\natexlab{a}}.

\bibitem[Levy and Goldberg(2014{\natexlab{b}})]{Levy--Goldberg2014neurips}
Omer Levy and Yoav Goldberg.
\newblock Neural word embedding as implicit matrix factorization.
\newblock In \emph{Adv. Neural Inf. Proc. Syst.}, volume~27, pages 2177--2185, 2014{\natexlab{b}}.

\bibitem[Liu et~al.(2015)Liu, Wen, and Zhang]{Liu--Wen--Zhang2015}
Xin Liu, Zaiwen Wen, and Yin Zhang.
\newblock An efficient {Gauss--Newton} algorithm for symmetric low-rank product matrix approximations.
\newblock \emph{SIAM J. Optim.}, 25\penalty0 (3):\penalty0 1571--1608, 2015.

\bibitem[Lu and Thicke(2017)]{Lu--Thicke2017}
Jianfeng Lu and Kyle Thicke.
\newblock {Orbital minimization method with $\ell^1$ regularization}.
\newblock \emph{J. Comput. Phys.}, 336:\penalty0 87--103, May 2017.
\newblock ISSN 0021-9991.
\newblock \doi{10.1016/j.jcp.2017.02.005}.

\bibitem[Machado et~al.(2018)Machado, Rosenbaum, Guo, Liu, Tesauro, and Campbell]{Machado--Rosenbaum--Guo--Liu--Tesauro--Campbell2017}
Marlos~C. Machado, Clemens Rosenbaum, Xiaoxiao Guo, Miao Liu, Gerald Tesauro, and Murray Campbell.
\newblock Eigenoption discovery through the deep successor representation.
\newblock In \emph{Int. Conf. Learn. Repr.}, 2018.
\newblock URL \url{https://openreview.net/forum?id=Bk8ZcAxR-}.

\bibitem[Mardt et~al.(2018)Mardt, Pasquali, Wu, and Noé]{Mardt--Pasquali--Wu--Noe2018vampnets}
Andreas Mardt, Luca Pasquali, Hao Wu, and Frank Noé.
\newblock {VAMPnets} for deep learning of molecular kinetics.
\newblock \emph{Nature Comm.}, 9\penalty0 (1):\penalty0 5, January 2018.
\newblock ISSN 2041-1723.
\newblock \doi{10.1038/s41467-017-02388-1}.
\newblock URL \url{https://doi.org/10.1038/s41467-017-02388-1}.

\bibitem[Mauri and Galli(1994)]{Mauri--Galli1994}
Francesco Mauri and Giulia Galli.
\newblock Electronic-structure calculations and molecular-dynamics simulations with linear system-size scaling.
\newblock \emph{Phys. Rev. B Condens. Matter}, 50\penalty0 (7):\penalty0 4316, 1994.

\bibitem[Mauri et~al.(1993)Mauri, Galli, and Car]{Mauri--Galli--Car1993}
Francesco Mauri, Giulia Galli, and Roberto Car.
\newblock Orbital formulation for electronic-structure calculations with linear system-size scaling.
\newblock \emph{Phys. Rev. B Condens. Matter}, 47\penalty0 (15):\penalty0 9973, 1993.

\bibitem[Mirsky(1960)]{Mirsky1960}
L~Mirsky.
\newblock Symmetric gauge functions and unitarily invariant norms.
\newblock \emph{Q. J. Math.}, 11\penalty0 (1):\penalty0 50--59, January 1960.
\newblock ISSN 0033-5606.
\newblock \doi{10.1093/qmath/11.1.50}.

\bibitem[Oja(1982)]{Oja1982}
Erkki Oja.
\newblock {A simplified neuron model as a principal component analyzer}.
\newblock \emph{J. Math. Biol.}, 15\penalty0 (3):\penalty0 267--273, 1982.
\newblock ISSN 0303-6812.
\newblock \doi{10.1007/BF00275687}.

\bibitem[Ordej{\'o}n et~al.(1995)Ordej{\'o}n, Drabold, Martin, and Grumbach]{Ordejon--Drabold--Martin--Grumbach1995}
Pablo Ordej{\'o}n, David~A Drabold, Richard~M Martin, and Matthew~P Grumbach.
\newblock Linear system-size scaling methods for electronic-structure calculations.
\newblock \emph{Phys. Rev. B Condens. Matter}, 51\penalty0 (3):\penalty0 1456, 1995.

\bibitem[Ordejón et~al.(1993)Ordejón, Drabold, Grumbach, and Martin]{Ordejon--Drabold--Grumbach--Martin1993}
Pablo Ordejón, David~A Drabold, Matthew~P Grumbach, and Richard~M Martin.
\newblock {Unconstrained minimization approach for electronic computations that scales linearly with system size}.
\newblock \emph{Phys. Rev. B Condens. Matter}, 48\penalty0 (19):\penalty0 14646--14649, November 1993.
\newblock ISSN 0163-1829.
\newblock \doi{10.1103/PhysRevB.48.14646}.

\bibitem[Paszke et~al.(2019)Paszke, Gross, Massa, Lerer, Bradbury, Chanan, Killeen, Lin, Gimelshein, Antiga, Desmaison, K\"{o}pf, Yang, DeVito, Raison, Tejani, Chilamkurthy, Steiner, Fang, Bai, and Chintala]{pytorch2019}
Adam Paszke, Sam Gross, Francisco Massa, Adam Lerer, James Bradbury, Gregory Chanan, Trevor Killeen, Zeming Lin, Natalia Gimelshein, Luca Antiga, Alban Desmaison, Andreas K\"{o}pf, Edward Yang, Zach DeVito, Martin Raison, Alykhan Tejani, Sasank Chilamkurthy, Benoit Steiner, Lu~Fang, Junjie Bai, and Soumith Chintala.
\newblock Py{T}orch: An imperative style, high-performance deep learning library.
\newblock In \emph{Adv. Neural Inf. Proc. Syst.}, Red Hook, NY, USA, 2019. Curran Associates Inc.

\bibitem[Pfau et~al.(2019)Pfau, Petersen, Agarwal, Barrett, and Stachenfeld]{Pfau--Petersen--Agarwal--Barrett--Stachenfeld2019}
David Pfau, Stig Petersen, Ashish Agarwal, David~GT Barrett, and Kimberly~L Stachenfeld.
\newblock Spectral inference networks: Unifying deep and spectral learning.
\newblock In \emph{Int. Conf. Learn. Repr.}, 2019.

\bibitem[Pfau et~al.(2020)Pfau, Spencer, Matthews, and Foulkes]{Pfau--Spencer--Matthews2020}
David Pfau, James~S Spencer, Alexander G D~G Matthews, and W~M~C Foulkes.
\newblock {Ab initio solution of the many-electron {S}chr\"odinger equation with deep neural networks}.
\newblock \emph{Phys. Rev. Res.}, 2\penalty0 (3):\penalty0 033429, September 2020.
\newblock \doi{10.1103/PhysRevResearch.2.033429}.

\bibitem[Pfau et~al.(2024)Pfau, Axelrod, Sutterud, von Glehn, and Spencer]{Pfau--Axelrod--Sutterud--von-Glehn--Spencer2024}
David Pfau, Simon Axelrod, Halvard Sutterud, Ingrid von Glehn, and James~S Spencer.
\newblock Accurate computation of quantum excited states with neural networks.
\newblock \emph{Science}, 385\penalty0 (6711):\penalty0 eadn0137, 2024.

\bibitem[Ryu et~al.(2024)Ryu, Xu, Erol, Bu, Zheng, and Wornell]{Ryu--Xu--Erol--Bu--Zheng--Wornell2024}
J.~Jon Ryu, Xiangxiang Xu, Hasan Sabri~Melihcan Erol, Yuheng Bu, Lizhong Zheng, and Gregory~W. Wornell.
\newblock Operator {SVD} with neural networks via nested low-rank approximation.
\newblock In \emph{Proc. Int. Conf. Mach. Learn.}, 2024.
\newblock URL \url{https://openreview.net/forum?id=qESG5HaaoJ}.

\bibitem[Sanger(1989)]{Sanger1989}
Terence~D Sanger.
\newblock {Optimal unsupervised learning in a single-layer linear feedforward neural network}.
\newblock \emph{Neural Netw.}, 2\penalty0 (6):\penalty0 459--473, January 1989.
\newblock ISSN 0893-6080.
\newblock \doi{10.1016/0893-6080(89)90044-0}.

\bibitem[Schmidt(1907)]{Schmidt1907}
Erhard Schmidt.
\newblock {Zur Theorie der linearen und nichtlinearen Integralgleichungen}.
\newblock \emph{Math. Ann.}, 63\penalty0 (4):\penalty0 433--476, December 1907.
\newblock ISSN 0025-5831, 1432-1807.
\newblock \doi{10.1007/BF01449770}.

\bibitem[Spielman(2019)]{Spielman2025}
Daniel Spielman.
\newblock Spectral and algebraic graph theory.
\newblock \emph{Lecture notes (Yale University)}, 4:\penalty0 47, 2019.

\bibitem[Tancik et~al.(2020)Tancik, Srinivasan, Mildenhall, Fridovich-Keil, Raghavan, Singhal, Ramamoorthi, Barron, and Ng]{Tancik--Srinivasan--Mildenhall--Fridovich-Keil--Raghavan--Singhal--Ramamoorthi--Barron--Ng2020}
Matthew Tancik, Pratul Srinivasan, Ben Mildenhall, Sara Fridovich-Keil, Nithin Raghavan, Utkarsh Singhal, Ravi Ramamoorthi, Jonathan Barron, and Ren Ng.
\newblock Fourier features let networks learn high frequency functions in low dimensional domains.
\newblock In \emph{Adv. Neural Inf. Proc. Syst.}, volume~33, pages 7537--7547, 2020.

\bibitem[Tang(2019)]{Tang2019}
Cheng Tang.
\newblock {Exponentially convergent stochastic k-{PCA} without variance reduction}.
\newblock In {e}, editor, \emph{Adv. Neural Inf. Proc. Syst.}, volume~32. Curran Associates, Inc., April 2019.

\bibitem[Trefethen and Bau(2022)]{Trefethen--Bau2022}
Lloyd~N Trefethen and David Bau.
\newblock \emph{Numerical linear algebra}, volume 181.
\newblock {SIAM}, 2022.

\bibitem[van~den Oord et~al.(2018)van~den Oord, Li, and Vinyals]{van-den-Oord--Li--Vinyals2018}
Aaron van~den Oord, Yazhe Li, and Oriol Vinyals.
\newblock Representation learning with contrastive predictive coding.
\newblock \emph{arXiv preprint arXiv:1807.03748}, 2018.

\bibitem[Wang et~al.(2021)Wang, Zhou, Zhang, Shao, Hooi, and Feng]{Wang--Zhou--Zhang--Shao--Hooi--Feng2021}
Kaixin Wang, Kuangqi Zhou, Qixin Zhang, Jie Shao, Bryan Hooi, and Jiashi Feng.
\newblock Towards better {L}aplacian representation in reinforcement learning with generalized graph drawing.
\newblock In \emph{Proc. Int. Conf. Mach. Learn.}, pages 11003--11012. PMLR, 2021.

\bibitem[Wang et~al.(2019)Wang, Wu, Huang, Zheng, Xu, Zhang, and Huang]{Wang--Wu--Huang--Zheng--Xu--Zhang--Huang2019}
Lichen Wang, Jiaxiang Wu, Shao-Lun Huang, Lizhong Zheng, Xiangxiang Xu, Lin Zhang, and Junzhou Huang.
\newblock {An Efficient Approach to Informative Feature Extraction from Multimodal Data}.
\newblock In \emph{Proc. AAAI Conf. Artif. Int.}, volume~33, pages 5281--5288, July 2019.
\newblock \doi{10.1609/aaai.v33i01.33015281}.

\bibitem[Wen et~al.(2016)Wen, Yang, Liu, and Zhang]{Wen--Yang--Liu--Zhang2016}
Zaiwen Wen, Chao Yang, Xin Liu, and Yin Zhang.
\newblock Trace-penalty minimization for large-scale eigenspace computation.
\newblock \emph{J. Sci. Comput.}, 66:\penalty0 1175--1203, 2016.

\bibitem[Wu et~al.(2019)Wu, Tucker, and Nachum]{Wu--Tucker--Nachum2018}
Yifan Wu, George Tucker, and Ofir Nachum.
\newblock The {L}aplacian in {RL}: Learning representations with efficient approximations.
\newblock In \emph{Int. Conf. Learn. Repr.}, 2019.
\newblock URL \url{https://openreview.net/forum?id=HJlNpoA5YQ}.

\bibitem[Xu and Zheng(2024)]{Xu--Zheng2024}
Xiangxiang Xu and Lizhong Zheng.
\newblock Neural feature learning in function space.
\newblock \emph{J. Mach. Learn. Res.}, 25\penalty0 (142):\penalty0 1--76, 2024.

\bibitem[You et~al.(2017)You, Gitman, and Ginsburg]{You--Gitman--Ginsburg2017}
Yang You, Igor Gitman, and Boris Ginsburg.
\newblock Large batch training of convolutional networks.
\newblock \emph{arXiv preprint arXiv:1708.03888}, 2017.

\end{thebibliography}
\newpage

\appendix
\addtocontents{toc}{\protect\StartAppendixEntries}
\listofatoc

\section{Deferred Proofs}
\label{app:proofs}

\subsection{Equivalence of $\Lc_{\omm}^{(p)}(\Vm)$ and $\tilde{\Lc}_{\omm}^{(p)}(\Vm)$}
To establish the equivalence between the original proposal $\Lc_{\omm}^{(p)}(\Vm)$ in Eq.~\eqref{eq:omm_original_form}
and our derivation $\tilde{\Lc}_{\omm}^{(p)}(\Vm)$ in Eq.~\eqref{eq:omm_higher_order}, we need to show that
\begin{align*}
-\tr(\Qm_p\Vm^\intercal\Am\Vm)
= \tr((\Imat_d-\Vm\Vm^\intercal)^{2p}\Am)-\tr(\Am),
\end{align*}
where we recall 
\begin{align*}
\Qm_p \defeq \sum_{i=0}^{2p-1}(\Imat_k-\Vm^\intercal\Vm)^i.
\end{align*}
To show this, we first note that we can write
\begin{align*}
\Qm_p
=\sum_{i=0}^{2p-1}(\Imat_k-\Sm)^i
=\Sm^{-1}(\Imat_k - (\Imat_k-\Sm)^{2p}),
\end{align*}
by letting $\Sm\defeq\Vm^\intercal\Vm$.
Hence, we have
\begin{align*}
-\tr(\Qm_p\Vm^\intercal\Am\Vm)
&= -\tr\Bigl(
\Sm^{-1}(\Imat_k - (\Imat_k-\Sm)^{2p}) \Vm^\intercal\Am\Vm
\Bigr)\\
&= -\tr(
\Sm^{-1}\Vm^\intercal\Am\Vm)
+\tr\Bigl(\Sm^{-1}(\Imat_k-\Sm)^{2p}\Vm^\intercal\Am\Vm\Bigr)\\
&= -\tr(
\Sm^{-1}\Vm^\intercal\Am\Vm)
+\tr(
\Sm^{-1}\Vm^\intercal\Am\Vm)
+\tr\Bigl((\Imat_d-\Vm\Vm^\intercal)^{2p}\Am\Bigr)
-\tr(\Am)\\
&= \tr\Bigl((\Imat_d-\Vm\Vm^\intercal)^{2p}\Am\Bigr)
-\tr(\Am).
\end{align*}
This concludes the proof.\hfill\qedsymbol{}

\subsection{Proof of Theorem~\ref{thm:main}}

\begin{proof}[Proof of Theorem~\ref{thm:main}]
Since  $\Vm\Vm^\intercal$ is of rank at most $k$ and PSD, the optimization can be reparameterized based on the reduced SVD of $\Vm\Vm^\intercal=\Pm\Sigma\Pm^\intercal$, where $\Pm\defeq[\pv_1,\ldots, \pv_k]\in\Real^{d\times k}$ and $\Sigma\defeq\diag(\mu_1,\ldots,\mu_k)\in\Real^{k\times k}$, such that $\Pm^\intercal\Pm=\Imat_k$ and $\mu_1\ge \ldots\ge \mu_k\ge 0$.
For a given orthogonal matrix $\Pm\in\Real^{d\times k}$, let $\Pm_{\perp}\in \Real^{d\times(d-k)}$ denote a matrix whose columns form an orthonormal basis of the orthogonal complement of the column subspace of $\Pm$.
Note $\Pm\Pm^\intercal + \Pm_{\perp}\Pm_{\perp}^\intercal=\Imat_d$ and $\Pm_{\perp}^\intercal\Pm=\Om_{(d-k)\times k}$ by construction, so that we can write
\begin{align*}
(\Imat_d - \Vm\Vm^\intercal)^{2p}
&= (\Pm(\Imat_k-\Sigma)\Pm^\intercal + \Pm_{\perp}\Pm_{\perp}^\intercal)^{2p}\\
&= \Pm(\Imat_k-\Sigma)^{2p}\Pm^\intercal + \Pm_\perp\Pm_\perp^\intercal.
\end{align*}
Hence, the objective function can be lower bounded as
\begin{align*}
\tr((\Imat_d - \Vm\Vm^\intercal)^{2p}\Am)
&= \tr(\Pm(\Imat_k-\Sigma)^{2p}\Pm^\intercal \Am) + \tr(\Pm_\perp^\intercal\Am\Pm_\perp)\\
&\stackrel{(a)}{\ge} \tr(\Pm_{\perp}^\intercal\Am\Pm_{\perp})\\
&\stackrel{(b)}{\ge}\sum_{\ell=k+1}^d \lambda_\ell.
\end{align*}
Here, $(a)$ follows since $\Pm(\Imat_k-\Sigma)^{2p}\Pm^\intercal$ and $\Am$ are both PSD and the inner product of two PSD matrices is always nonnegative. 
Further, $(b)$ follows since $\tr(\Pm_{\perp}^\intercal\Am\Pm_{\perp})$ is minimized as $\sum_{\ell=k+1}^d \lambda_\ell$ when optimized over an orthogonal matrix $\Pm_{\perp}\in\Real^{d\times(d-k)}$.
We now consider the equality condition.
First, $(b)$ holds with equality if and only if $\Pm_\perp$ consists of a bottom-$(d-k)$ eigenbasis of $\Am$, or equivalently $\Pm$ can be written as $\Pm=\Vm\Qm^\intercal$ by an orthogonal matrix $\Qm\in\Real^{k\times k}$.
Given that, we can write
\begin{align*}
\tr(\Pm(\Imat_k-\Sigma)^{2p}\Pm^\intercal\Am)
&= \tr((\Imat_k-\Sigma)^{2p}\Qm\Vm^\intercal\Am\Vm\Qm^\intercal)\\
&= \tr((\Imat_k-\Sigma)^{2p}\Qm\Lambda_{1:k}\Qm^\intercal)\\
&=\sum_{\ell=1}^k (1-\mu_\ell)^{2p} \lambda_\ell\qv_\ell\qv_\ell^\intercal.
\end{align*}
This implies that $(a)$ holds with equality if and only if $\mu_\ell=1$ for all $\ell\in[k\wedge r]$.
\end{proof}

\subsection{Proof of Theorem~\ref{thm:jnt_nesting}}

\begin{proof}[Proof of Theorem~\ref{thm:jnt_nesting}]
The global minima of the jointly nested objective $\Lc_{\omm}^{(p)}(\Vm;\weights)\defeq \sum_{i=1}^k \a_i\Lc_{\omm}^{(p)}(\Vm_{1:i})$ is achieved if and only if $\Lc_{\omm}^{(p)}(\Vm_{1:i})$ is minimized for each $i\in[k]$. 
Hence, if the $\Vm^\star\in\Real^{d\times k}$ achieves the global minima, then 
by the optimality condition from $\Lc_{\omm}^{(p)}(\Vm_{1:i}^\star)$, it must satisfy
\[
\sum_{j=1}^i \vv_j^\star(\vv_j^\star)^\intercal = 
\sum_{j=1}^i \wv_j\wv_j^\intercal
\]
for each $i\in [k]$.
By telescoping, this leads to $\vv_i=\wv_i$ for each $i$.
\end{proof}

\section{On Implementation}
\label{app:implementation}
In this section, we present omitted details on the implementation of NestedOMM.

\subsection{Implementation of Sanger's Variant}
Similar to the implementation of the sequential nesting by the following partially stop-gradient objective
\begin{align*}
\Lc_{\omm}^{\mathsf{seq}}(\fv)\defeq -2\tr(\seqsecondmoment{\rho}{\fv,\Top\fv}) 
+ \tr(\seqsecondmoment{\rho}{\fv}\seqsecondmoment{\rho}{\fv,\Top\fv}),
\end{align*}
we can implement the Sanger variant by auto-differentiating the following objective
\begin{align*}
\Lc_{\omm}^{\mathsf{sanger}}(\fv)\defeq -2\tr(\secondmoment{\rho}{\fv,\Top\fv}) 
+ \tr(\seqsecondmoment{\rho}{\fv}
\stopgrad{\secondmoment{\rho}{\fv,\Top\fv}}).
\end{align*}
With the appropriate stop-gradient operation, its gradient implements the Sanger variant.

\subsection{Pseudocode for NestedOMM-1}
Here, we include a unified PyTorch~\citep{pytorch2019} implementation of various versions of the original OMM with $p=1$ (which we call OMM-1): OMM-1 without any nesting (when \texttt{nesting} is \texttt{None}), OMM-1 with the joint nesting (when \texttt{nesting} is \texttt{jnt}), OMM-1 with the sequential nesting (when \texttt{nesting} is \texttt{seq}), and OMM-1 with the Sanger variant (when \texttt{nesting} is \texttt{sanger}).

\begin{lstlisting}[]
class NestedOrbitalLoss:
    def __init__(
        self, 
        nesting=None, 
        n_modes=None,
    ):
        assert nesting in [None, 'jnt', 'seq', 'sanger']
        self.nesting = nesting
        if self.nesting == 'jnt':
            assert n_modes is not None
            self.vec_mask, self.mat_mask = get_joint_nesting_masks(weights=np.ones(n_modes) / n_modes)
        else:
            self.vec_mask, self.mat_mask = None, None
        
    def __call__(self, f: torch.Tensor, Tf: torch.Tensor):
        # f: [b, k]
        # Tf: [b, k]
        if self.nesting == 'jnt':
            M_f = compute_second_moment(f)
            M_f_Tf = compute_second_moment(f, Tf)
            operator_term = -2 * (torch.diag(self.vec_mask.to(f.device)) * M_f_Tf).mean(0).sum()
            metric_term = (self.mat_mask.to(f.device) * M_f_Tf * M_f).sum()
        else:  # if self.nesting in [None, 'seq', 'sanger']:
            M_f = compute_second_moment(f, seq_nesting=self.nesting in ['seq', 'sanger'])
            M_f_Tf = compute_second_moment(f, Tf, seq_nesting=self.nesting == 'seq')
            if self.nesting == 'sanger':
                M_f_Tf = M_f_Tf.detach()
            operator_term = -2 * torch.trace(M_f_Tf)
            metric_term = (M_f_Tf * M_f).sum()
        
        return operator_term + metric_term


def compute_second_moment(
        f: torch.Tensor,
        g: torch.Tensor = None,
        seq_nesting: bool = False
    ) -> torch.Tensor:
    """
    compute (optionally sequentially nested) second-moment matrix
        M_ij = <f_i, g_j>
    with partial stop-gradient handling when seq_nesting is True.

    args
    ----
    f : (n, k) tensor
    g : (n, k) tensor or None
    seq_nesting : bool
    """
    if g is None:
        g = f
    n = f.shape[0]
    if not seq_nesting:
        return (f.T @ g) / n
    else:
        # apply partial stop-gradient
        # lower-triangular: <f_i, sg[g_j]> for i > j
        lower = torch.tril(f.T @ g.detach(), diagonal=-1)
        # upper-triangular: <sg[f_i], g_j> for i < j
        upper = torch.triu(f.detach().T @ g, diagonal=+1)
        # diagonal:         <f_i, g_i>     (no stop-grad)
        diag  = torch.diag((f * g).sum(dim=0))
        return (lower + diag + upper) / n


def get_joint_nesting_masks(weights: np.ndarray):
    vector_mask = list(np.cumsum(list(weights)[::-1])[::-1])
    vector_mask = torch.tensor(np.array(vector_mask)).float()
    matrix_mask = torch.minimum(
        vector_mask.unsqueeze(1), vector_mask.unsqueeze(1).T
    ).float()
    return vector_mask, matrix_mask
\end{lstlisting}

\subsection{Computational Complexity of OMM}
\label{app:complexity}
In terms of complexity in computing a given objective, the complexity of OMM-1 is similar to LoRA \citep{Ryu--Xu--Erol--Bu--Zheng--Wornell2024} and ALLO \citep{Gomez--Bowling--Machado2024}.
For each minibatch of size $B$, the dominant factor is in computing the empirical moment matrices, which takes $O(Bk^2)$ complexity for a given minibatch of size $B$ and for $k$ eigenfunctions. 
When $p > 1$, the complexity becomes $O(Bk^2 + pk^3)$, as we need to recursively compute the powers of the overlap matrix in Eq. $\eqref{eq:omm_higher_order_function}$.

For a detailed comparison between the parametric spectral decomposition approach and a conventional numerical eigensolver, we refer to \citep[Appendix~A]{Ryu--Xu--Erol--Bu--Zheng--Wornell2024}.
There, the authors report an undesirable scaling behavior exhibited by numerical eigensolvers and highlight the efficiency advantage of the LoRA-based parametric formulation.
Given that the computational complexity of gradient evaluation in LoRA and OMM is equivalent, the same argument applies here.

\section{Details on Experimental Setups and Additional Results}
\label{app:exps}
In this section, we describe the detail on the experimental setups.
All codebases to reproduce the experiments will be made public upon acceptance of the paper.
All experiments were conducted on a single GPU, either a \texttt{NVIDIA GeForce RTX 3090} (24GB) or a \texttt{NVIDIA RTX A6000} (48GB).

\subsection{Laplacian Representation Learning for Reinforcement Learning}
\label{app:exps_laplacian}

Our implementation was built upon the codebase of \citet{Gomez--Bowling--Machado2024}.\footnote{Github repository: \url{https://github.com/tarod13/laplacian_dual_dynamics}}
\subsubsection{Experimental Setup}
We followed the experimental setup in \citep{Gomez--Bowling--Machado2024} closely. 
We visualize the RL environments in Figure~\ref{fig:grids}.
\begin{itemize}
\item \textbf{Data generation}: For each experiment, we generated $N=10^6$ transition samples from a uniform random policy and uniform initial state distribution.
The $(x, y)$ coordinates are given to a neural network as inputs.

\item \textbf{Architecture}: We used a fully connected neural network with $3$ hidden layers of $256$ units with ReLU activations, with an output layer of $k = 11$ to learn the top-$k$ Laplacian eigenfunction representation. 

\item \textbf{Optimization}: We trained for 80,000 epochs using the Adam optimizer with a learning rate of $10^{-3}$. In the first $10\%$ of training steps, we also add a linear warm-up from $0$ to $10^{-3}$. Additionally, we reproduce the ALLO training with identical number of transition samples, epochs, and optimizer, with the suggested initial barrier coefficient $b = 2.0$ and $\alpha_{\text{barrier}} = 0.01$ reported in \citep{Gomez--Bowling--Machado2024}. 
We trained with NestedOMM for four random runs, and ALLO with 10 runs.
Since the smallest eigenvalue of each Laplacian matrix can be negative in this case, but being bounded below by $-1$ (by the Gershgorin circle theorem), we add the identity matrix to the Laplacian matrix to shift the spectrum.

\item \textbf{Performance metric}: The performance is measured by the average of cosine similarities of each mode, where degenerate spaces are handled by an oracle knowing the degeneracy.
\end{itemize}

\begin{figure}[htb]
    \centering
    \includegraphics[width=\linewidth]{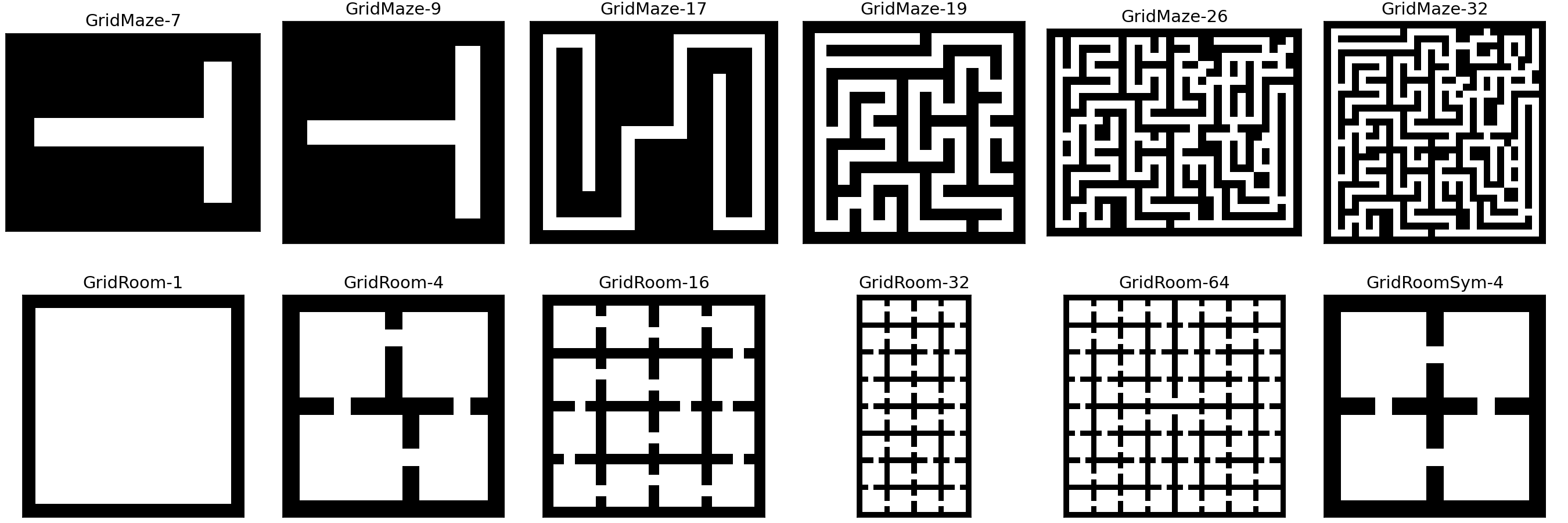}
    \caption{Grid environments for Laplacian representation learning.}
    \label{fig:grids}
\end{figure}

\subsubsection{Ablation Study: Hyperparameter Sensitivity of ALLO vs OMM}
We continue with the experimental setup in \citep{Gomez--Bowling--Machado2024} but attempt the more difficult task of learning the top $k = 50$ rather than top $k = 11$ Laplacian eigenfunctions. Our goal is to examine differences in hyperparameter sensitivity between ALLO and OMM.

\begin{itemize}
    \item \textbf{Revised Setup:} For both OMM and ALLO training in the ablation study, we use the following common modifications to the experimental setup: we generate $N = 5 \cdot 10^6$ transition samples from a uniform random policy and uniform initial state distribution. Our architecture keeps the $3$ hidden layers of $256$ units with ReLU activations, but the output layer's dimension is $k = 50$ to learn the top-$50$ Laplacian eigenfunctions. Additionally, we omit the GridMaze-7 and GridMaze-9 environments, as they have less than 50 eigenmodes. The optimization recipe and performance metric remain the same.
    
    \item \textbf{Initial Evaluation of ALLO Hyperparameters:} We begin by examining how well the tuned ALLO hyperparameters transfer from $k=11$ to $k=50$. In particular, we use the suggested initial barrier coefficient $b = 2.0$ and $\alpha_{\text{barrier}} = 0.01$. After training each environment for four runs, we obtain the results displayed in Table \ref{tab:ablation-allo-omm}(a). It is clear that the learned eigenfunctions do not capture the top-$50$ modes well, and that we require additional hyperparameter tuning to effectively learn the eigenmodes with ALLO.
    
    \item \textbf{ALLO Hyperparameter Sweep:} As in \citep{Gomez--Bowling--Machado2024}, we sweep the initial barrier coefficient $b$ over $[0.5, 1.0, 2.0, 10.0]$, while $\alpha_{\text{barrier}}$ sweeps over $[0.001, 0.01, 0.05, 0.1, 0.2, 0.5, 1.0]$. We train over the GridRoom-1, GridRoom-16, and GridMaze-19 environments, and we also use $10^6$ transition samples instead of $2 \cdot 10^5$ for hyperparameter selection. This yields the best pair as $(b, \alpha_{\text{barrier}}) = (1.0, 0.1)$. Using the updated hyperparameters, we rerun ALLO training for $k=50$ with $3$ independent runs per environment. The results are displayed in Table \ref{tab:ablation-allo-omm}(b). We can observe a significant jump in performance, but at the expense of a costly hyperparameter search.

    \item \textbf{OMM Retraining With Same Hyperparameters:} In contrast, we train OMM for $k=50$ with the same recipe as $k=11$, with the only difference being the increase to $5\cdot10^6$ transition samples. We train for $5$ independent runs per environment and get the results displayed in Table \ref{tab:ablation-allo-omm}(c). As we can see, the representational power of OMM is comparable with ALLO, but we were able to train without the costly pre-training hyperparameter sweep. 
    
\end{itemize}

\begin{table*}[t]
\centering
\caption{Ablation study results with ALLO and OMM for $k=50$.}
\label{tab:ablation-allo-omm}
\footnotesize
\setlength{\tabcolsep}{2.5pt}
\begin{subtable}{\textwidth}
\centering
\caption{ALLO for $k=50$ with optimal hyperparameters tuned for $k=11$.}
\begin{tabular}{@{}l*{10}{r}@{}}
\toprule
Env & GridMaze-17 & GridMaze-19 & GridMaze-26 & GridMaze-32 & GridRoom-1 \\
\midrule
Mean & 0.8150 & 0.7616 & 0.4361 & 0.3340 & 0.8081 \\
Std  & 0.0234 & 0.0277 & 0.0361 & 0.0303 & 0.0077 \\
\midrule[\heavyrulewidth]
Env & GridRoom-4 & GridRoom-16 & GridRoom-32 & GridRoom-64 & GridRoomSym-4 \\
\midrule
Mean & 0.6414 & 0.6692 & 0.4837 & 0.3420 & 0.6297 \\
Std  & 0.0190 & 0.0153 & 0.0451 & 0.0164 & 0.0321 \\
\bottomrule
\end{tabular}
\end{subtable}

\vspace{1em}

\begin{subtable}{\textwidth}
\centering
\caption{ALLO for $k=50$ with re-tuned hyperparameters ($b=1.0$, $\alpha_{\text{barrier}}=0.1$).}
\begin{tabular}{@{}l*{10}{r}@{}}
\toprule
Env & GridMaze-17 & GridMaze-19 & GridMaze-26 & GridMaze-32 & GridRoom-1 \\
\midrule
Mean & 0.9798 & 0.9669 & 0.8233 & 0.7359 & 0.9088 \\
Std  & 0.0022 & 0.0001 & 0.0165 & 0.0176 & 0.0328 \\
\midrule[\heavyrulewidth]
Env & GridRoom-4 & GridRoom-16 & GridRoom-32 & GridRoom-64 & GridRoomSym-4 \\
\midrule
Mean & 0.8681 & 0.8885 & 0.7452 & 0.7417 & 0.7880 \\
Std  & 0.0004 & 0.0127 & 0.0093 & 0.0473 & 0.0241 \\
\bottomrule
\end{tabular}
\end{subtable}

\vspace{1em}

\begin{subtable}{\textwidth}
\centering
\caption{OMM for $k=50$ with same training recipe for $k=11$.}
\begin{tabular}{@{}l*{10}{r}@{}}
\toprule
Env & GridMaze-17 & GridMaze-19 & GridMaze-26 & GridMaze-32 & GridRoom-1 \\
\midrule
Mean & 0.9435 & 0.9594 & 0.8226 & 0.6536 & 0.9532 \\
Std  & 0.0111 & 0.0128 & 0.0150 & 0.0370 & 0.0090 \\
\midrule[\heavyrulewidth]
Env & GridRoom-4 & GridRoom-16 & GridRoom-32 & GridRoom-64 & GridRoomSym-4 \\
\midrule
Mean & 0.8697 & 0.9191 & 0.7433 & 0.6985 & 0.8107 \\
Std  & 0.0045 & 0.0015 & 0.0219 & 0.0350 & 0.0041 \\
\bottomrule
\end{tabular}
\end{subtable}

\end{table*}

\subsection{Solving Schr\"odinger Equations}
\label{app:schrodinger}
We followed the setup in \citep{Ryu--Xu--Erol--Bu--Zheng--Wornell2024} closely, implementing our code based on top of theirs.\footnote{Github repository: \url{https://github.com/jongharyu/neural-svd}}

\subsubsection{Experimental Setup for 2D Hydrogen Atom}
Exactly same configurations were used for both the Sanger variant and LoRA$_{\mathsf{seq}}$ with sequential nesting. 
Since the configuration is almost same as \citep[Appendix~E.1.1]{Ryu--Xu--Erol--Bu--Zheng--Wornell2024}, we note the setup succinctly, and refer an interested reader to therein for the detailed setup.

\begin{itemize}
\item \textbf{Data generation}: We opted to use a Gaussian distribution $\Nc(0,16\Imat_2)$ as a sampling distribution, and generated new minibatch sample of size 512.

\item \textbf{Architecture}: We used 16 separate fully connected neural networks with 3 hidden layers of 128 units with the softplus activation. We also used the multi-scale Fourier features~\citep{Tancik--Srinivasan--Mildenhall--Fridovich-Keil--Raghavan--Singhal--Ramamoorthi--Barron--Ng2020}.

\item \textbf{Optimization}: 
We trained the neural networks for $10^5$ iterations. To enable a direct comparison between OMM and LoRA under consistent computational settings, we used five times fewer iterations than in \citep{Ryu--Xu--Erol--Bu--Zheng--Wornell2024}. While longer training may improve performance, our choice facilitates a controlled evaluation of relative efficacy.
We note that we used Adam optimizer~\citep{Kingma--Ba2014} with learning rate $10^{-4}$ with the cosine learning rate scheduler, instead of RMSprop~\citep{Hinton--Srivastava--Swersky2012}, which we found to perform worse than Adam. 
We multiplied the operator by $100$, but did not shift it by a multiple of identity.

\item \textbf{Performance metric}: Following \citep{Ryu--Xu--Erol--Bu--Zheng--Wornell2024}, we report the relative errors in eigenvalue estimates, angle distances for each mode (also  similar to the Laplacian representation learning for RL), and subspace distances within each degenerate subspace. For each metric, we followed the same procedure defined in \citep{Ryu--Xu--Erol--Bu--Zheng--Wornell2024}. We ran 10 different training runs for each method and report average values with standard deviations.
\end{itemize}

The numerical comparisons are summarized in Figure~\ref{fig:hydrogen_numerical}. 
First, we note that the OMM$_{\mathsf{seq}}$ diverged quickly without any additional identity shift, as we alluded to earlier. 
The Sanger variant is comparable to or sometimes even better than LoRA$_{\mathsf{seq}}$ (i.e., NeuralSVD$_{\mathsf{seq}}$) in terms of different metrics.
Since LoRA$_{\mathsf{seq}}$ was the strongest method from \citep{Ryu--Xu--Erol--Bu--Zheng--Wornell2024}, this result suggests that OMM can be also alternatively used in place of LoRA when we wish to decompose a PSD operator.
As alluded to earlier, however, OMM cannot deal with unbounded differential operators (such as harmonic oscillators) inherently, while LoRA is capable of that.

\begin{figure}[t]
    \centering
    \includegraphics[width=\linewidth]{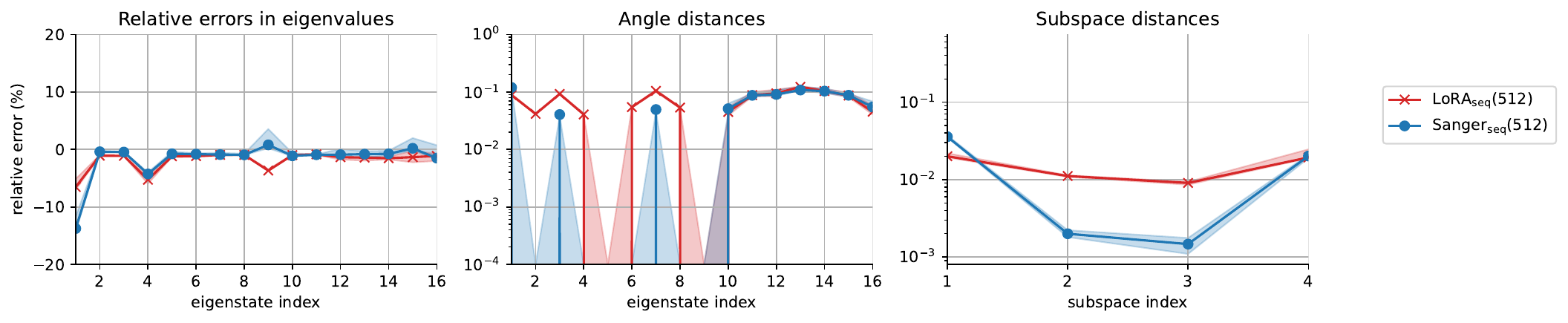}
    \caption{Summary of the 2D hydrogen experiment in Figure~\ref{fig:hydrogen}. The shaded region indicates $\pm$ standard deviations.}
    \label{fig:hydrogen_numerical}
\end{figure}

\subsubsection{Results on Unbounded-Spectra Case}

In this section, we present numerical results for Schr\"odinger equations with unbounded spectra with the inverse-operator trick.

\textbf{2D Harmonic Oscillator.}
When $V(\xv)=\|\xv\|^2$ in the Schr\"odinger, it is called the quantum harmonic oscillator. 
This is another classical textbook example in quantum mechanics~\citep{Griffiths--Schroeter2018}. For simplicity, we consider the two-dimensional case.
Similar to the 2D hydrogen atom case, we closely followed the setting in \citep{Ryu--Xu--Erol--Bu--Zheng--Wornell2024}, except the followings.

\begin{itemize}
\item \textbf{Data generation}: We used a Gaussian distribution $\Nc(0,4\Imat_2)$ as a sampling distribution with batch size 512.

\item \textbf{Architecture}: We used $k=28$ disjoint neural networks essentially same as those used for the 2D hydrogen atom. 
On top of that, we applied the Dirichlet boundary mask~\citep{Pfau--Spencer--Matthews2020} over the bounded box $[-10,10]^2$, to ensure that the parametric eigenfunctions vanish outside the box.
We found that this is essential to enable successful training.

\item \textbf{Optimization}: 
We trained for $5\times 10^4$ iterations. 
We used Adam optimizer~\citep{Kingma--Ba2014} with learning rate $10^{-4}$ and the cosine learning rate scheduler.
We note that RMSProp did not lead to a successful training in this case.
\end{itemize}

The results are shown in Figures~\ref{fig:oscillator_eigfuncs} and \ref{fig:oscillator_numerical}.
Figure~\ref{fig:oscillator_eigfuncs} visualizes the top-15 learned eigenfunctions.
After applying subspace alignment via the orthogonal Procrustes procedure for fair comparison, the learned eigenfunctions closely match the ground-truth solutions.
The quantitative results in Figure~\ref{fig:oscillator_numerical} further confirm that the corresponding eigenvalues are accurately recovered.

\textbf{2D Infinite Well.}
Yet another simple example is the infinite-well problem, where $V(\xv)=0$ for $\xv\in\Omega$ and $V(\xv)=\infty$ otherwise for some domain $\Omega$.
In this case, the stationary Schrödinger equation reduces to the Laplace equation on $\Omega$ with zero Dirichlet boundary conditions.
We consider the two-dimensional problem with $\Omega=[-L,L]^2$ where $L=5$.
The eigenfunctions can be explicitly written as
\begin{equation}
\psi_{n_x, n_y}(x, y)
= \frac{1}{L}
\sin\Biggl(\frac{n_x \pi (x + L)}{2L}\Biggr)
\sin\Biggl(\frac{n_y \pi (y + L)}{2L}\Biggr),
\end{equation}
with eigenvalues
\begin{equation}
\lambda_{n_x, n_y}
= \frac{\pi^2}{4L^2}\,(n_x^2 + n_y^2).
\end{equation}
Here, $(n_x,n_y)\in\Natural\times\Natural$ are the quantum numbers.

\begin{itemize}
\item \textbf{Data generation}: We used a uniform distribution for sampling with batch size 512.

\item \textbf{Architecture}: We used $k=15$ disjoint neural networks without random Fourier features, but with the Dirichlet boundary mask.

\item \textbf{Optimization}: 
We trained for $5\times 10^3$ iterations. 
We used Adam optimizer~\citep{Kingma--Ba2014} with learning rate $10^{-3}$ without any learning rate scheduler.
RMSProp was also not effective in this case.
Also, we found that the autograd implementation is very crucial with the Laplacian operator in the Hamiltonian, while the finite-difference-based approximation of the Laplacian, which is proven to work for other experiments and in \citep{Ryu--Xu--Erol--Bu--Zheng--Wornell2024}, leads to slow convergence.
We conjecture that this behavior is due to the operator-dependent parameterization in the operator-inverse trick, which could amplify the approximation gap of the operator, if there is any.
\end{itemize}

The results are shown in Figures~\ref{fig:inf_well_eigfuncs} and \ref{fig:inf_well_numerical}.
While the training process can be sensitive to certain hyperparameter configurations, the learned eigenfunctions exhibit high quality once properly optimized.
Such sensitivity is a common issue in deep learning practice, though it may make the OMM less appealing in comparison, as the LoRA approach is considerably more straightforward to apply.

\begin{figure}[t]
    \centering
    \includegraphics[width=\linewidth]{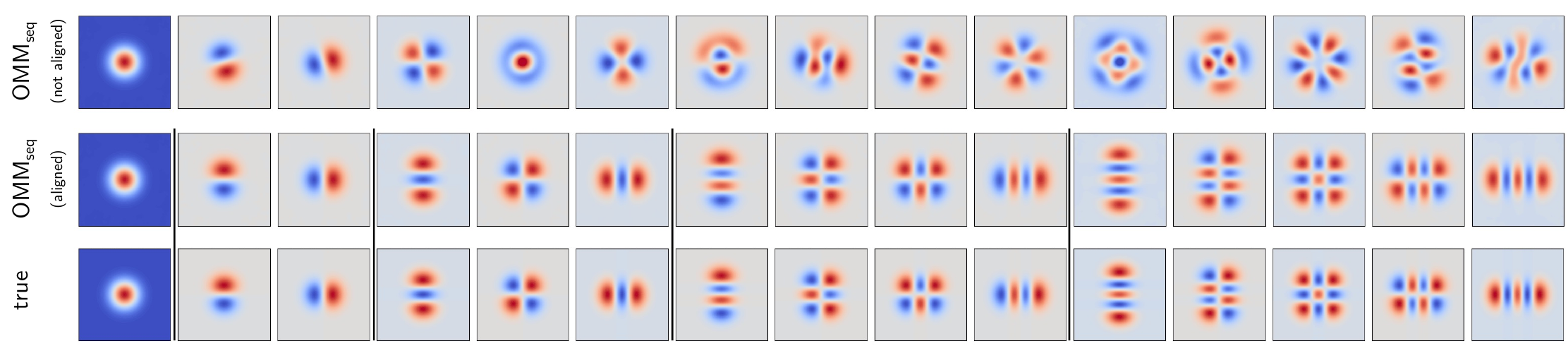}
    \caption{Visualization of the first 15 of learned eigenfunctions with OMM$_{\mathsf{seq}}$ on the 2D harmonic oscillator. The first row shows the raw learned parametric eigenfunctions. The second row presents the eigenfunctions aligned to the ground-truth degenerate subspaces via the orthogonal Procrustes procedure as instructed in \citep{Ryu--Xu--Erol--Bu--Zheng--Wornell2024}. The third row shows the ground-truth eigenfunctions.}
    \label{fig:oscillator_eigfuncs}
\end{figure}
\begin{figure}[t]
    \centering
    \includegraphics[width=\linewidth]{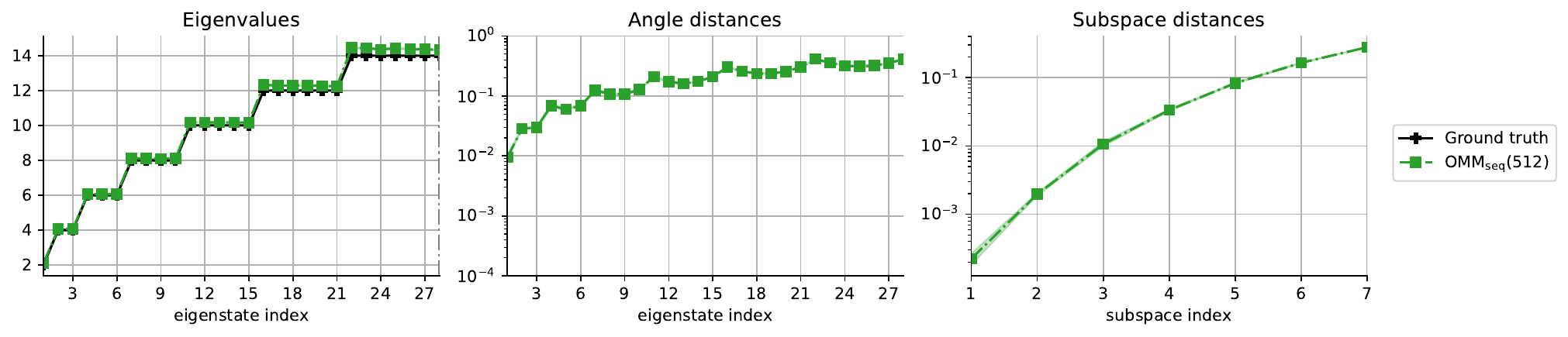}
    \caption{Summary of the 2D harmonic oscillator experiment in Figure~\ref{fig:oscillator_numerical}. The shaded region indicates $\pm$ standard deviations.}
    \label{fig:oscillator_numerical}
\end{figure}

\begin{figure}[t]
    \centering
    \includegraphics[width=\linewidth]{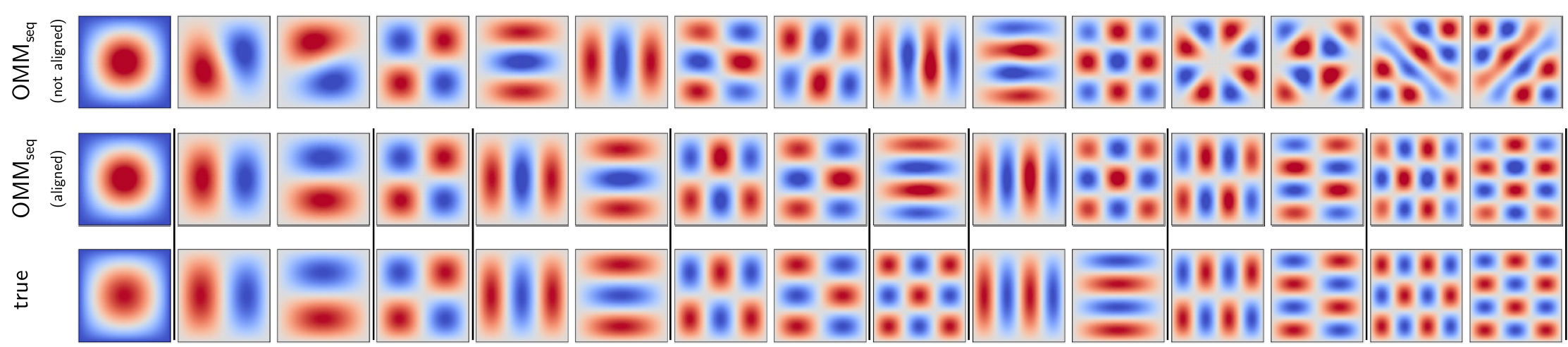}
    \caption{Visualization of the first 15 of learned eigenfunctions with OMM$_{\mathsf{seq}}$ on the 2D harmonic oscillator. The first row shows the raw learned parametric eigenfunctions. The second row presents the eigenfunctions aligned to the ground-truth degenerate subspaces via the orthogonal Procrustes procedure as instructed in \citep{Ryu--Xu--Erol--Bu--Zheng--Wornell2024}. The third row shows the ground-truth eigenfunctions.}
    \label{fig:inf_well_eigfuncs}
\end{figure}
\begin{figure}[t]
    \centering
    \includegraphics[width=\linewidth]{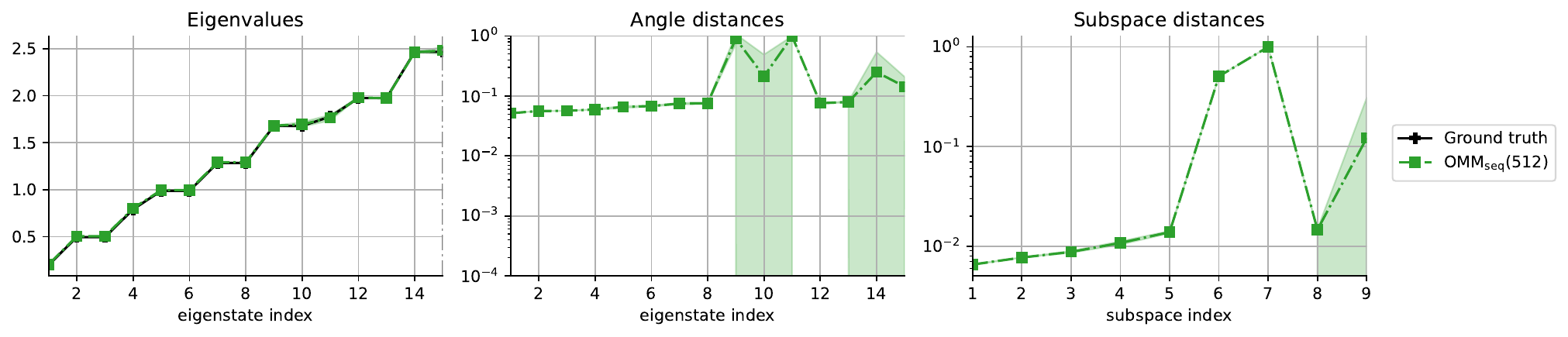}
    \caption{Summary of the 2D harmonic oscillator experiment in Figure~\ref{fig:inf_well_eigfuncs}. The shaded region indicates $\pm$ standard deviations.}
    \label{fig:inf_well_numerical}
\end{figure}

\subsection{Self-Supervised Contrastive Representation Learning}

In this section, we describe the experimental setup for the image experiment in the main text, as well as an additional graph experiment.

\subsubsection{Representation Learning for Images}
For this experiment, we used the \texttt{solo-learn} codebase of \citet{JMLR:v23:21-1155}.\footnote{Github repository: \url{https://github.com/vturrisi/solo-learn}}

\begin{itemize}
\item \textbf{Data generation}: We used the default data augmentations for CIFAR-100 in the codebase. The exact configurations to reproduce the results will be shared upon acceptance.

\item \textbf{Architecture}: 
We used ResNet-18~\citep{He--Zhang--Ren--Sun2016} as our backbone model and adopted two different feature encoding strategies: (1) we used a nonlinear projector of shape \texttt{Linear(feature\_dim,2048)-BatchNorm1D-ReLU-Linear(2048,2048)-BatchNorm1D -ReLU-Linear(2048,256)}; (2) similar to DirectCLR~\citep{Jing--Vincent--LeCun--Tian2021}, we removed the projector and simply train the top $k=64$ dimensions of the ResNet-18 feature as the top-$k$ eigenfunctions using the OMM objective. 
In both cases, each feature vector is normalized by its $\ell_2$-norm following the standard convention.

\item \textbf{Optimization}: We used the default optimization configuration of the codebase.\footnote{We refer the reader to the configuration for SimCLR pretraining: \url{https://github.com/vturrisi/solo-learn/blob/main/scripts/pretrain/cifar/simclr.yaml}.} 
We use the LARS optimizer~\citep{You--Gitman--Ginsburg2017} with weight decay set to 0, initial learning rate of 0.3 governed by a cosine decay schedule,
batch size 256, and 1000 epochs.

\item \textbf{Evaluation}: We evaluate the representation based on the linear probe accuracy on the test split, trained by SGD with learning rate 0.1, batch size 256, and 100 epochs.

\end{itemize}

\subsubsection{Representation Learning with Graph Data}
\label{app:graph}

We can apply the orbital minimization principle to graph data.
Suppose that we are given an adjacency matrix $\Am\in\Real^{N\times N}$, where $\Am_{ij}$ encodes the connectivity between nodes $i$ and $j$.
In the spectral graph theory~\citep{Spielman2025}, it is well known that the lowest eigenvectors of (symmetrically normalized) graph Laplacian $\Lm_{\mathsf{sym}}\defeq \Imat - \Dm^{-1/2}\Am\Dm^{-1/2}$ encodes important properties of the underlying graph, and it is also well known that the spectrum is bounded within $[0, 2]$.

When the graph is large and when each node $i\in[N]$ is associated with a feature vector $\xv_i$, computing the eigenvectors numerically might be cumbersome and  extrapolation to new points is nontrivial.
Hence, in this case, it is natural to learn eigenfunctions $\fv(\xv)$ of the graph Laplacian as a function of the feature vector $\xv$.
In this section, we show the applicability of OMM in this scenario, and assess the quality of the learned eigenfunctions of the graph Laplacian in the node classification task. 

We closely followed the experimental setup in the \emph{Neural Eigenmaps} paper \citep{Deng--Shi--Zhang--Cui--Lu--Zhu2022}, implementing our code based on the codebase of \citet{Deng--Shi--Zhang--Cui--Lu--Zhu2022}.\footnote{Github repository: \url{https://github.com/thudzj/NEigenmaps}}
Neural Eigenmaps was proposed as a method to find eigenfunctions of a PSD operator similar to OMM, but it is a regularization-based approach and thus does not have the sharp global optimality that OMM enjoys.
Moreover, practitioners also need to tune the regularization parameter $\alpha$ in the framework, while OMM is hyperparameter-free.

\begin{itemize}
\item \textbf{Data}: We used the \texttt{ogbn-products} dataset~\citep{Hu--Fey--Zitnik--Dong--Ren--Liu--Catasta--Leskovec2020}, where the feature vector has 100 dimensions and the classification task has 47 classes.
It is a large-scale node property prediction benchmark, and the accompanied graph consists of 2,449,029 nodes and 61,859,140 edges.
The density of this graph is $2.06\times 10^{-5}$, suggesting that the underlying graph is extremely sparse.

\item \textbf{Architecture}: We parameterize the eigenfunctions by an 11-layer MLP encoder with a width of 2048 and residual connections~\citep{He--Zhang--Ren--Sun2016}, followed by a projector of the same architecture as in the image experiment. We used 8192-4096 hidden units for OMM and LoRA, and 8192-8192 for Neural Eigenmaps.
We note that, with OMM, we did not require the feature normalization by $\ell_2$-norm. 
In contrast, Neural Eigenmaps quickly diverged without the $\ell_2$-norm normalization and thus used the normalization so that the feature has $\ell_2$-norm 10.
Even worse, we also trained a model with LoRA~\citep{Ryu--Xu--Erol--Bu--Zheng--Wornell2024} (i.e., NeuralSVD without nesting), but the training dynamics were unstable and diverged regardless of $\ell_2$-normalization.

\item \textbf{Optimization}: Training was conducted over 20 epochs on the full set of nodes using the LARS optimizer~\citep{You--Gitman--Ginsburg2017} with a batch size of 16384, weight decay set to 0, and an initial learning rate of 0.3 with a cosine learning rate scheduler. 
We used the default hyperparameter $\a=0.3$ for Neural Eigenmaps as suggested in the paper~\citep{Deng--Shi--Zhang--Cui--Lu--Zhu2022}, which was selected based on linear probe accuracy on the validation set.

\item \textbf{Evaluation}: Similar to the image experiment, we evaluate the representation based on the linear probe accuracy on the test split, trained by SGD with learning rate 0.01 and weight decay $10^{-3}$, batch size 256, and 100 epochs.
We consider two evaluation strategies. The first is to train a linear classifier directly on the original training labels. The second follows the \emph{Correct \& Smooth} (C\&S) method of \citet{Huang--He--Singh--Lim--Benson2021}, which enhances node classification by first correcting the training labels using a graph-based error estimation and then smoothing the corrected labels via feature propagation. This procedure produces a refined supervision signal for training, often leading to improved downstream performance.
We used the default configuration in the Neural Eigenmaps codebase for C\&S.
\end{itemize}

\textbf{Results.}
We summarize our result in Table~\ref{tab:ogbn_products}.
\begin{itemize}
\item 
First, unlike the standard image contrastive representation learning setting and Neural Eigenmaps, we found that OMM was capable of training the final embedding trained to fit the eigenfunctions to become highly performant on the classification task. 
That is, remarkably, the linear probe performance from the embedding is better than the intermediate feature, which is the output of the MLP.
We note that the drastic performance drop in Neural Eigenmaps from $\sim 74\%$ (representation) to $\sim 50\%$ (embedding) is a typical behavior.
This implies that while Neural Eigenmaps might provide sufficient signal for the intermediate feature to capture relevant information about each node, the \emph{embedding} might not be truly trained to fit the underlying eigenfunctions and thus provides worse discriminative power. 
On the other hand, the good classification performance of embedding (even better than representation) of OMM suggests that the OMM objective may behave better than competitors in the context of capturing true eigenfunctions.

\item Second, we observe that the C\&S postprocessing boosts the classification accuracy for all cases to be relatively close. Nonetheless, even after the application of C\&S, we find that the performance of the OMM embedding is clearly the best.
\end{itemize}

\begin{table}[tbh]
\centering
\caption{Summary of OGBN-products experiment ($\%$). The model was trained once for each method, but the linear probe were trained for 10 different times for each case. The $\pm$'s indicate the standard deviations. ``Representation'' refers to the linear probe accuracy based on the output of the MLP backbone, and ``Embedding'' refers to that based on the output of the projector, which is trained to fit the underlying eigenfunctions. The LoRA objective failed to yield convergent training dynamics.}
\vspace{.5em}
\begin{tabular}{rcccc}
\toprule
& \multicolumn{2}{c}{\textbf{Finetuning}} & \multicolumn{2}{c}{\textbf{Correct \& Smooth~\citep{Huang--He--Singh--Lim--Benson2021}}} \\
\cmidrule(lr){2-3} \cmidrule(lr){4-5}
             & {representation} & {embedding}             & {representation} & {embedding}             \\
\midrule
LoRA~\citep{Ryu--Xu--Erol--Bu--Zheng--Wornell2024} & N/A & N/A & N/A & N/A\\
Neural Eigenmaps~\citep{Deng--Shi--Zhang--Cui--Lu--Zhu2022} & 74.05{\tiny$\pm$1.71} & 50.76{\tiny$\pm$0.72} & 82.40{\tiny$\pm$0.91} & 80.90{\tiny$\pm$0.20} \\
OMM (ours)              & 73.66{\tiny$\pm$1.88} & \textbf{74.17{\tiny$\pm$0.19}} & 82.06{\tiny$\pm$1.03} & \textbf{84.11{\tiny$\pm$0.12}} \\
\bottomrule
\end{tabular}
\label{tab:ogbn_products}
\end{table}

\section{On the Benefit of Higher-Order OMM}\label{app:higher_order}
In the self-supervised image representation learning experiment, we observe sharp increase in the downstream task performance by using the OMM-2 objective $\Lc_{\omm}^{(2)}(\Vm)$, and even better by using the mixed objective $\Lc_{\omm}^{(1)}(\Vm)+\Lc_{\omm}^{(2)}(\Vm)$.
In this section, we provide a theoretical argument on the practical benefit of the higher-order OMM based on a gradient analysis.

We analyze the gradient for the finite-dimensional case for simplicity, but the same argument is readily extended to the function case.
Let $\Rm\defeq \Imat_d - \Vm\Vm^\intercal$.
Then, we can show, by chain rule, that the gradient of the OMM-$p$ objective is
\begin{align*}
\nabla_{\Vm}\Lc_{\omm}^{(p)}(\Vm)
&= \nabla_{\Vm}\tr((\Imat_d-\Vm\Vm^\intercal)^{2p} \Am)\\
&= \nabla_{\Rm}\tr(\Rm^{2p}\Am) \nabla_{\Vm}\Rm\\
&=-2 \biggl(\sum_{i=0}^{2p-1}\Rm^i \Am \Rm^{2p-1-i}\biggr)\Vm\\
&= -2(\Rm^{2p-1}\Am+\Rm^{2p-2}\Am\Rm
+\ldots
+\Rm\Am\Rm^{2p-2}
+\Am\Rm^{2p-1})\Vm.
\end{align*}

For the purpose of our analysis, we restrict our attention to the case where $\Rm=\Vm\Vm^\intercal$ is idempotent, i.e., $\Rm^2=\Rm$. 
Then the gradient expression simplifies to
\begin{align*}
\nabla_{\Vm}\Lc_{\omm}^{(p)}(\Vm)
&= -2(\Rm\Am+\Am\Rm
+(2p-2)\Rm\Am\Rm)\Vm\\
&= \nabla_{\Vm}\Lc_{\omm}^{(1)}(\Vm)
-4(p-1)\Rm\Am\Rm\Vm,
\end{align*} where we have the base $p = 1$ case of \begin{align*}
    \nabla_{\Vm} \Lc_{\omm}^{(1)} (\Vm) &= -2(\Rm \Am + \Am \Rm)\Vm
\end{align*}
Hence, if we consider the Frobenius norm of the gradient,
\begin{align*}
\|\nabla_{\Vm}\Lc_{\omm}^{(p)}(\Vm)\|_{\mathrm{F}}^2
&= \|\nabla_{\Vm}\Lc_{\omm}^{(1)}(\Vm)\|_{\mathrm{F}}^2
+\Delta,
\end{align*}
where we let
\[
\Delta \defeq 16(p-1)^2\|\Rm\Am\Rm\Vm\|_{\mathrm{F}}^2
-8(p-1)\tr\Bigl(\nabla_{\Vm}\Lc_{\omm}^{(1)}(\Vm)^\intercal\Rm\Am\Rm\Vm\Bigr).
\]
This shows that when $p>1$ is sufficiently large, the gradient norm can be made strictly larger than the norm of $\nabla_{\Vm}\Lc_{\omm}^{(1)}(\Vm)$.
This can improve convergence speed when near convergence, especially when $\|\Imat_k-\Vm^\intercal\Vm\|_{\mathrm{F}}$ is close to 0, since the gradient norm of the original OMM gradient $\|\nabla_{\Vm}\Lc_{\omm}^{(1)}(\Vm)\|_{\mathrm{F}}$ becomes small proportional to $\|\Imat_k-\Vm^\intercal\Vm\|_{\mathrm{F}}$. 
In practical optimization, the flat minima may cause immature convergence, and the additional gradient signal from the OMM with $p>1$ can help escape the flat minima.

\end{document}